\documentclass[letterpaper, 10pt]{IEEEtran}
\usepackage{amsmath,graphicx,epsfig,color,amsfonts,subfigure}
\usepackage{version,xspace}

\usepackage[round, sort]{natbib}
\renewcommand{\cite}[1]{\citep{#1}}

\usepackage[vlined,ruled]{algorithm2e}

\graphicspath{{./img/}}

\def\dist{\mathcal{D}}
\def\expt{\mathbb{E}}
\def\real{\mathbb{R}}

\def\naturals{\mathbb{N}}

\newcommand{\until}[1]{\{1,\dots, #1\}}

\newcommand{\subscr}[2]{#1_{\textup{#2}}}
\newcommand{\supscr}[2]{#1^{\textup{#2}}}
\newcommand{\setdef}[2]{\{#1 \; | \; #2\}}
\newcommand{\seqdef}[2]{\{#1\}_{#2}}

\newcommand{\map}[3]{#1: #2 \rightarrow #3}
\newcommand{\union}{\operatorname{\cup}}

\newcommand{\subject}{\text{subject to}}

\newcommand{\minimize}{\text{minimize}}

\newcommand{\mc}{\mathcal}
\newcommand{\support}{\operatorname{supp}}
\newcommand{\degree}{\ensuremath{^\circ}}

\newcommand\oprocendsymbol{\hbox{$\square$}}
\newcommand\oprocend{\relax\ifmmode\else\unskip\hfill\fi\oprocendsymbol}

\renewcommand{\theenumi}{(\roman{enumi}}

\newcommand\bit[1]{\textit{\textbf{#1}}}

\newtheorem{theorem}{Theorem}
\newtheorem{lemma}[theorem]{Lemma}
\newtheorem{remark}{Remark}

\newtheorem{example}{Example}
\newtheorem{definition}[theorem]{Definition}

\newtheorem{conjecture}[theorem]{Conjecture}

\def \q{\boldsymbol{q}}
\def \p{\boldsymbol{p}}

\def \gav{\subscr{\delta}{avg}}

\def \Tone {\bar{T}_{\textup{one}}}
\def \Tbar {\bar{T}}
\def \Tsmall {\bar{T}^{m\textup{-smlst}}}

\def \etab {\bar{\eta}}
\def \a{\boldsymbol{a}}

\title{Stochastic Surveillance Strategies \\ for Spatial Quickest Detection}

\author{Vaibhav~Srivastava~\hspace{1in}~Fabio~Pasqualetti~\hspace{1in}~Francesco~Bullo
  \thanks{A preliminary version of this work~\citep{VS-FB:11m} was
    presented at IEEE Conference on Decision and Control and European
    Control Conference, 2011. In addition to the ideas
    in~\citep{VS-FB:11m}, this paper contains a rigorous analysis of
    the single vehicle surveillance, the multiple vehicle
    surveillance, extensive numerical illustrations, and a persistent
    surveillance experiment.}  \thanks{This work has been supported in
    part by AFOSR~MURI~Award-FA9550-07-1-0528, by NSF
    Award~CPS-1035917 and by ARO Award W911NF-11-1-0092.}
  \thanks{Vaibhav~Srivastava, Fabio~Pasqualetti and Francesco~Bullo
    are with Center for Control, Dynamical Systems, and Computation,
    University of California, Santa Barbara, Santa Barbara, CA 93106,
    USA, {\tt{\{vaibhav,fabiopas, bullo\} @engineering.ucsb.edu}}}}

\begin{document}

\maketitle


\begin{abstract}
  We design persistent surveillance strategies for the quickest
  detection of anomalies taking place in an environment of interest.
  From a set of predefined regions in the environment, a team of
  autonomous vehicles collects noisy observations, which a control
  center processes.  The overall objective is to minimize detection
  delay while maintaining the false alarm rate below a desired
  threshold.  We present joint (i) anomaly detection algorithms for
  the control center and (ii) vehicle routing policies.  For the control
  center, we propose parallel cumulative sum (CUSUM) algorithms (one
  for each region) to detect anomalies from noisy observations.  For
  the vehicles, we propose a stochastic routing policy, in which the
  regions to be visited are chosen according to a probability vector.
  We study stationary routing policy (the probability vector is constant)
  as well as adaptive routing policies (the probability vector varies in
  time as a function of the likelihood of regional anomalies).
  In the context of stationary policies, we design a
  performance metric and minimize it to design an efficient stationary
  routing policy.
  Our adaptive policy improves upon the stationary counterpart by
  adaptively increasing the selection probability of regions with high
  likelihood of anomaly. Finally, we show the effectiveness of the
  proposed algorithms through numerical simulations and a persistent
  surveillance experiment.
\end{abstract}

\begin{keywords}
vehicle routing, statistical decision making, quickest detection,
persistent surveillance, patrolling, security, motion planning.
\end{keywords}

\section{Introduction}
Recent years have witnessed a surge in the application of autonomous agents
in various activities such as surveillance and information collection. In
view of the recent Icelandic ash problem, the oil spill in the gulf of
Mexico, and recurring wild fires, surveillance strategies resulting in the
quickest detection of anomalies are of considerable importance. Due to
extreme sensor and modeling uncertainties in these situations, robust
anomaly detection methods need to be employed. Generally, a limited number
of vehicles are deployed to survey a large number of regions, and it is
fundamental that the vehicles collect the information that is most
effective to minimize the detection delay of anomalies.  In this paper we
design surveillance strategies that result in quick detection of anomalies.


A reliable detection of anomalies can be achieved by collecting
observations sequentially until the evidence suggesting an anomaly
reaches a substantial level.  
Various sequential algorithms for the
detection of anomalies have been presented in~\cite{MB-IVN:93}. 
Furthermore, it is known that a human being typically performs
well in detecting and identifying anomalies from
observations. Recent advances in cognitive
psychology~\cite{RB-EB-etal:06, RR-JG-MEN:03}, show that human
performance in decision making tasks is well modeled by sequential
statistical procedures such as the CUSUM algorithm.  
Inspired by the above human decision making models, in this work we adopt sequential statistical tests
for anomaly detection.


\textbf{Our setup and approach.}  We consider an environment
comprising of potentially disjoint regions of interest, and we employ
a team of autonomous vehicles for the persistent surveillance of these
regions. In particular, the vehicles visit the regions, collect
information, and send it to a control center. We study a spatial
quickest detection problem with multiple vehicles, that is, the
simultaneous quickest detection of anomalies at spatially distributed
regions when the observations for anomaly detection are collected by
autonomous vehicles. For this problem, we let the control center run
parallel CUSUM algorithms (one for each region) with the collected
information. The control center then decides on the presence of
anomalies in the regions. Finally, we design vehicle routing policies
to collect observations at different regions. Our vehicle routing
policies aim to minimize the anomaly detection time at the control
center.




\textbf{Related work.} Vehicle routing policies have witnessed a
lot of attention in the robotics and controls literature.  A survey on
dynamic vehicle routing policies for servicing tasks is presented
in~\cite{fb-ef-mp-ks-sls:10k}. Recently, the routing for information
aggregation has been of particular interest.
\citet{DJK-etal:10} present a vehicle routing policy for
optimal localization of an acoustic source.  They consider a set of
spatially distributed sensors and optimize the trade-off between the
travel time required to collect a sensor observation and the
information contained in the observation. They characterize the
information in an observation by the volume of the Cramer-Rao ellipsoid
associated with an optimal estimator.
\citet{GAH-UM-GSS:11b} study routing for an AUV to
collect data from an underwater sensor network. They developed
approximation algorithms for variants of the traveling salesperson
problem to determine efficient policies that maximize the information
collected while minimizing the travel time.
\citet{VG-THC-BH-RMM:06} study the estimation in a linear
dynamical system with the observations collected by a set of mobile
sensors. They determine stochastic trajectories for mobile sensors
that minimize the error covariance of the Kalman filter estimate.
\citet{DZ-CC-TB:11} study the
estimation of environmental plumes with mobile sensors. They minimize the
uncertainty of the estimate of the ensemble Kalman filter to determine 
optimal trajectories for a swarm of mobile
sensors.  

There has been some interest in decision theoretic information
aggregation and vehicle routing as well.
\citet{DAC:95} poses the search problem as a dynamic
hypothesis test, and determines the optimal routing policy that
maximizes the probability of detection of a target.
\citet{THC-JWB:12} study the probabilistic search problem
in a decision theoretic framework. They minimize the search decision
time in a Bayesian setting.
Certain optimal information aggregation strategies for sequential
hypothesis testing have been developed in~\cite{VS-KP-FB:08p,
  VS-KP-FB:10l}.
\citet{GAH-UM-GSS:11a} study an active classification
problem in which an autonomous vehicle classifies an object based on
multiple views. They formulate the problem in an active Bayesian
learning framework and apply it to underwater detection. 
The persistent surveillance problem in this paper also concerns with decision-theoretic information aggregation and vehicle routing.
In contrast to the aforementioned works that focus on classification or search problems, our focus is on  quickest detection of anomalies.

The problem of surveillance has received considerable attention
recently. Preliminary results on this topic have been presented in
\cite{YC:04,YE-AS-GAK:08,DBK-RWB-RSH:08}. \citet{fp-af-fb:09v}
study the problem of optimal cooperative surveillance with multiple
agents. They optimize the time gap between any two visits to the same
region, and the time necessary to inform every agent about an event
occurred in the environment. \citet{SLS-DR:10} consider the
surveillance of multiple regions with changing features and determine
policies that minimize the maximum change in features between the
observations. A persistent monitoring task where vehicles move on a
given closed path has been considered
in~\cite{SLS-MS-DR:11,FP-JWD-FB:11h}, and a speed controller has been
designed to minimize the time lag between visits of regions.

Stochastic surveillance and pursuit-evasion problems have also fetched
significant attention. In an earlier work,~\citet{JPH-HJK-SSS:99} studied multi-agent probabilistic
pursuit evasion game with the policy that, at each instant, directs
pursuers to a location that maximizes the probability of finding an
evader at that instant.
\citet{JG-JB:05} formulate the surveillance problem as a
random walk on a hypergraph and parametrically vary the local
transition probabilities over time in order to achieve an accelerated
convergence to a desired steady state distribution.
\citet{TS-JW-SG:08} present  partitioning and routing 
strategies for surveillance of regions for different intruder models.
\citet{KS-DMS-MWS:09} present a stochastic surveillance problem in
centralized and decentralized frameworks. They use a Markov chain Monte
Carlo method and a message passing based auction algorithm to achieve the desired
surveillance criterion.  They also show that the deterministic strategies
fail to satisfy the surveillance criterion under general conditions.
In this paper, we focus on stochastic surveillance policies. In contrast to aforementioned works on stochastic surveillance that assume a surveillance criterion is known, this work concerns the design of the surveillance criterion. The policies designed in this paper direct a vehicle with high probability to a region with high probability of being anomalous, a feature akin to the heuristic  policy for the pursuer in~\cite{JPH-HJK-SSS:99}. 
On the other hand, with respect to~\cite{JPH-HJK-SSS:99}, our policy 
takes into account environmental factors, e.g., travel times and detection difficulty, and it satisfies an optimality criterion.

\textbf{Paper contributions.} The main contributions of this work are
fivefold. First, we formulate the stochastic surveillance problem for
spatial quickest detection of anomalies (Section \ref{sec:setup}). We
propose the ensemble CUSUM algorithm for a control center to detect
concurrent anomalies at different regions from collected observations
(Section \ref{sec:spatial-quickest-detection}). For the ensemble CUSUM
algorithm we characterize lower bounds for the expected detection
delay and for the average (expected) detection delay at each
region. Our bounds take into account the processing times for
collecting observations, the prior probability of anomalies at each
region, and the anomaly detection difficulty at each region.

Second, for the case of stationary routing policies, we provide bounds
on the expected delay in detection of anomalies at each region
(Section \ref{sec:randomized-ensemble-cusum}). In particular, we take
into account both the processing times for collecting observations and
the travel times between regions. For the single vehicle case, we
explicitly characterize the expected number of observations necessary
to detect an anomaly at a region, and the corresponding expected
detection delay. For the multiple vehicles case, we characterize lower
bounds for the expected detection delay and the average detection
delay at the regions. As a complementary result, we show that the
expected detection delay for a single vehicle is, in general, a
non-convex function. However, we provide probabilistic guarantees that
it admits a unique global minimum.

Third, we design stationary vehicle routing policies to collect
observations from different regions (Section
\ref{sec:randomized-ensemble-cusum}). For the single vehicle case, we
design an efficient stationary policy by minimizing an upper bound for
the average detection delay at the regions. For the multiple vehicles
case, we first partition the regions among the vehicles, and then we
let each vehicle survey the assigned regions by using the routing
policy as in the single vehicle case. In both cases we characterize
the performance of our policies in terms of expected detection delay
and average (expected) detection delay.

Fourth, we describe our adaptive ensemble CUSUM algorithm, in which
the routing policy is adapted according to the learned likelihood of
anomalies in the regions (Section \ref{sec:adaptive-policy}). We
derive an analytic bound for the performance of our adaptive
policy. Finally, our numerical results show that our adaptive policy
outperforms the stationary counterpart.

Fifth and finally, we report the results of extensive numerical
simulations and a persistent surveillance experiment (Sections
\ref{sec:numerical} and \ref{sec:experimental-results}). Besides
confirming our theoretical findings, these practical results show that
our algorithm are robust against realistic noise models, and sensors
and motion uncertainties.

\section{Problem Setup}\label{sec:setup}
We consider the \emph{persistent surveillance} of a set of $n$
disjoint regions with a team of $m<n$ identical\footnote{The vehicle
  routing policies designed in this paper also work for non-identical
  vehicles. We make this assumption for the convenience of analysis.}
autonomous vehicles capable of sensing, communicating, and moving from
one region to another.  In persistent surveillance, the vehicles visit
the regions according to some routing policy, collect evidence (sensor
observation), and send it to a control center. The control center runs
an anomaly detection algorithm with the evidence collected by the
vehicles to determine the likelihood of an anomaly being present at
some region (the control center declares an anomaly if substantial
evidence is present). Finally, the control center utilizes the
likelihood of an anomaly at each region to determine a vehicle routing
policy. The objective of the control center is to detect an anomaly at
any region in minimum time subject to a desired bound on the expected
time between any two subsequent false alarms. Notice that the time
required to detect an anomaly depends on the anomaly detection
algorithm and the time vehicles take to travel the regions. Thus, the
control center needs to minimize the anomaly detection time jointly
over anomaly detection policies and vehicle routing policies. Our
problem setup is shown in Fig.~\ref{fig:setup}.

\begin{figure}
    \centering
    \includegraphics[width=.9\columnwidth]{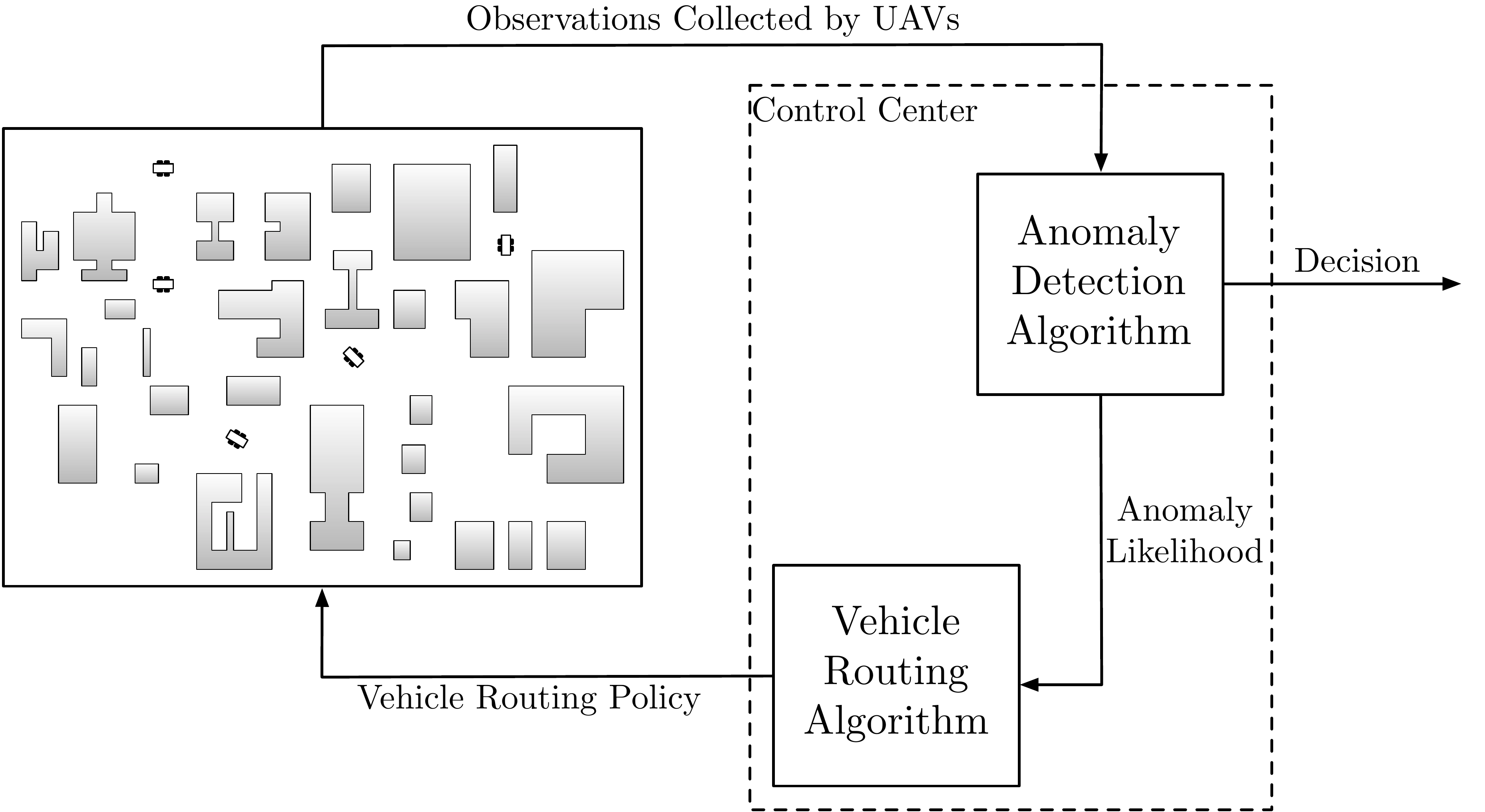}
    \caption{Persistent Surveillance Setup. A set of $n$ regions is
      surveyed by $m<n$ vehicles. Each vehicle visits the regions according
      to some policy and collects evidence from the visited region. The
      collected evidence is sent to an anomaly detection algorithm. The
      anomaly detection algorithm processes the collected evidence and
      decides on the presence of an anomaly. It also provides the
      likelihood of an anomaly being present, which in turn is used by the
      vehicle routing algorithm. The anomaly detection algorithm and
      vehicle routing algorithm constitute the control center, which
      can be implemented on-board of a vehicle.}
    \label{fig:setup}
\end{figure}

We adopt the standard motion planning notation in~\cite{SML:06}.  We
denote the $k$-th region by $\mathcal{R}_k, k\in \until{n}$, and
$r$-th vehicle by $\mathcal{U}_r, r \in \until{m}$. Let the likelihood
of an anomaly at region $\mc R_k$ be $\pi_k\in (0,1)$. We study the
persistent surveillance problem under the following assumptions.

Regarding the vehicles, we do not assume any specific dynamics and we
assume that:

\renewcommand{\theenumi}{(A\arabic{enumi}}
\begin{enumerate}
\item each vehicle takes time $d_{ij}$ to travel from region
  $\mathcal{R}_i$ to region $\mathcal{R}_j$, $i,j\in\until{n}$;
\item the sensors on each vehicle take a random time $T_k$ to collect
  an informative observation\footnote{An informative observation may require the
    acquisition of several observations from different locations at the
    same region. In this case the processing time equals the total
    time required to collect all these observations.} from region $\mathcal{R}_k, k\in
  \until{n}$.
  \newcounter{tmp}
  \setcounter{tmp}{\value{enumi}}
\end{enumerate}

Regarding the observations, we assume that:
\begin{enumerate}
\setcounter{enumi}{\value{tmp}}
\item the observation collected by a vehicle from region $\mathcal{R}_k$ is
  sampled from probability density functions 
  $\map{f^0_k}{\real}{\real_{\ge 0}}$ and
  $\map{f^1_k}{\real}{\real_{\ge 0}}$, respectively, in the presence
  and in the absence of anomalies;
\item for each $k\in \until{n}$, probability density functions $f^1_k$ and
  $f^0_k$ are non-identical with some non-zero probability, and the two distributions have the same support;
\item conditioned on the presence or absence of anomalies, the
  observations in each region are mutually independent; and
\item observations in different regions are also mutually independent.
\end{enumerate}

Regarding the anomaly detection algorithm at the control center, we
employ the cumulative sum (CUSUM) algorithm (see below) for anomaly
detection at each region.  In particular, we run $n$ parallel CUSUM
algorithms (one for each region) and declare an anomaly being present
at a region as soon as a substantial evidence is present.  We refer to
such parallel CUSUM algorithms by {\it ensemble CUSUM algorithm}.

\renewcommand{\theenumi}{(\roman{enumi}}
\begin{remark}[\bit{Knowledge of distributions}]
  For the ease of presentation, we assume that the probability density
  functions in presence and absence of an anomaly are known. In
  general, only the probability density function in absence of any
  anomaly may be known, or both the probability density functions may
  be unknown.  In the first case, the CUSUM algorithm can be replaced
  by the weighted CUSUM algorithm or the Generalized Likelihood Ratio
  (GLR) algorithm~\cite{MB-IVN:93}, while in the second case, it
  can be replaced by the robust minimax quickest change detection
  algorithm~\cite{JU-VVV-SPM:11}.  The ideas presented in
  this paper extend to these cases in a straightforward way. A related
  example is in Section \ref{sec:numerical}. \oprocend
\end{remark}

\begin{remark}[\bit{Independence of observations}]
  For the ease of presentation, we assume that the observations
  collected from each region are independent conditioned on the
  presence and absence of anomalies. In general, the observations may
  be dependent and the dependence can be captured through an
  appropriate hidden Markov model.  If the observations can be modeled
  as a hidden Markov model, then the CUSUM like algorithm
  in~\cite{BC-PW:00} can be used instead of the standard CUSUM
  algorithm. The analysis presented in this paper holds in this case
  as well but in an asymptotic sense, i.e., in the limit when a large
  number of observations are needed for anomaly detection.

  We also assumed that the observations collected from different
  regions are mutually independent. Although the ideas in this paper
  also work when the observations at different regions are dependent,
  the performance can be improved with a slight modification in the
  procedure presented here (see Remark \ref{remark:dependence}). In
  this case the algorithm performance improves because each
  observation is now informative about more than one region.
  \oprocend
\end{remark}

Regarding the vehicle routing policy, we propose the \emph{randomized
  routing policy}, and the \emph{adaptive routing policy}. In the
randomized routing policy, each vehicle (i) selects a region from a
stationary distribution, (ii) visits that region, (iii) collects an
evidence, and (iv) transmits this evidence to the control center and
iterates this process endlessly. In the randomized routing policy, the
evidence collected by the vehicles is not utilized to modify their
routing policy. In other words, there is no feedback from the anomaly
detection algorithm to the vehicle routing algorithm. In the adaptive
routing policy, instead, the evidence collected by the vehicles is used
to modify the routing policy, and thus, the loop between the vehicle
routing algorithm and the anomaly detection algorithm is closed. The
adaptive routing policy follows the same steps as in the randomized
routing policy, with the exception that the distribution in step (i)
is no longer stationary and is adapted based on the collected evidence.

For brevity of notation, we will refer to the joint anomaly detection
and vehicle routing policy comprising of the ensemble CUSUM algorithm
and the randomized routing policy by \emph{randomized ensemble CUSUM
  algorithm.}  We will show that the randomized ensemble CUSUM
algorithm provides a solution that is within a factor of optimality.
Similarly, we refer to the joint anomaly detection and vehicle routing
policy comprising of the ensemble CUSUM algorithm and adaptive routing
policy by \emph{adaptive ensemble CUSUM algorithm.}  We will show that
adaptive ensemble CUSUM algorithm makes the vehicles visit anomalous
regions with high probability, and thus it improves upon the
performance of the randomized ensemble CUSUM algorithm. The following
standard definition~\cite{TMC-JAT:91} will be used in the
remaining sections.

\begin{definition}[\bit{Kullback-Leibler divergence}]
  Given two probability mass functions
  $\map{f_1}{\mathcal{S}}{\real_{\ge0}}$ and
  $\map{f_2}{\mathcal{S}}{\real_{\ge 0}}$, where $\mathcal{S}$ is some
  countable set, the Kullback-Leibler divergence
  $\map{\dist}{\mathcal{L}^1\times\mathcal{L}^1}{\real\union\{+\infty\}}$
  is defined by
 \begin{equation*}
   \dist(f_1,f_2)= \expt_{f_1}\bigg[\!\log\frac{f_1(X)}{f_2(X)}\bigg]=
   \sum_{x\in\support(f_1)} \!\!\! f_1(x) \log\frac{f_1(x)}{f_2(x)},
 \end{equation*}
 where $\mathcal{L}^1$ is the set of integrable functions,
 $\expt_{f_1}[\cdot]$ represents expected value with respect to $f_1$,
 $X$ is a random variable sampled from $f_1$, and $\support(f_1)$ is
 the support of $f_1$. \oprocend
\end{definition}
It is known that (i) $0\le\dist(f_1,f_2)\le+\infty$, (ii) the lower bound
is achieved if and only if $f_1=f_2$ almost everywhere, and (iii) the
upper bound is achieved if and only if the support of $f_2$ is a
strict subset of the support of
$f_1$. 
Observe that Assumption (A4) on the observations is equivalent to
$\dist(f^1_k,f^0_k) >0$, for each $k\in \until{n}$.

We now introduce some notations that will be used throughout the
paper. 
We denote the probability simplex in $\real^n$
by $\Delta_{n-1}$, and the space of vehicle routing policies by
$\Omega$. For the processing time $T_k$, we let $\Tbar_k$ denote its
expected value. Consider $m$ realizations of the processing time
$T_k$, we denote the expected value of the minimum of these $m$
realized values by $\Tsmall_k$.  Note that
$\bar{T}^{1\textup{-smlst}}_k = \Tbar_k$. We also define $\Tbar_{\max}
= \max\setdef{\Tbar_k}{k\in\until{n}}$ and $\Tbar_{\min} = \min
\setdef{\Tbar_k}{k\in\until{n}}$. We denote the Kullback-Leibler
divergence between the probability density functions $f^1_k$ and
$f^0_k$ by $\dist_k$. Finally, $\dist_{\max} =
\max\setdef{\dist_k}{k\in\until{n}}$ and $\dist_{\min} =
\min\setdef{\dist_k}{k\in\until{n}}$.
For the convenience of the reader, we have enlisted the
notation in Table~\ref{tab:notations}.  \begin{table}[ht]
\centering
\caption{List of symbols \label{tab:notations}}
\resizebox{\linewidth}{!}{
\renewcommand{\arraystretch}{1.4}
\begin{tabular}{|c|l|}
\hline
$n$ & number of regions\\
$m$ & number of robots\\
$\mc R_k, k\in \until{n}$ & $k$-th region\\
$\mc U_r, r\in \until{m}$ & $r$-th vehicle\\
$\pi_k, k\in \until{n}$ & prior probability of anomaly at $\mc R_k$\\
$w_k$ & $\pi_k /(\sum_{j=1}^n \pi_j)$\\
$d_{ij}, i,j \in \until{n}$ & travel time between $\mc R_i$ and $\mc R_j$\\
$T_k, k\in \until{n}$ & processing time at $\mc R_k$\\
$\bar T_k, k\in \until{n}$ &expected processing time at $\mc R_k$\\
$\bar T_{\max}$ & $\max\setdef{\bar T_k}{k\in \until{n}}$\\
$\bar T_{\min}$ & $\min\setdef{\bar T_k}{k\in \until{n}}$\\
$\bar T_k^{m\text{-smlst}}$ & $\expt [\min\{T_k^{(1)}, \ldots, T_k^{(m)}\} ]$, where \\
& $T_k^{(1)}, \ldots, T_k^{(m)}$ are $m$ realizations of $T_k$\\
$\bar T_{\min}^{m\text{-smlst}}$  & $\min\setdef{\bar T_k^{m\text{-smlst}}}{k\in \until{n}}$\\ 
$\Xi$ &  set of sets of $m$ arbitrary regions\\
$\subscr{\bar T}{one}$ & $\min \setdef{\expt[\min\{t_1^{\xi}, \ldots,t_m^{\xi} \}]}{\xi \in \Xi}$,  where\\
& $t_i^{\xi}$'s are the processing times at regions in $\xi$\\
$f^0_k, k\in \until{n}$ & pdf in absence of anomaly at $\mc R_k$\\
$f^1_k, k\in \until{n}$ & pdf in presence of anomaly at $\mc R_k$\\
$\dist_k, k\in \until{n}$ & K-L divergence between $f^1_k$ and $f^0_k$\\
$\dist_{\max}$ & $\max\setdef{\dist_k}{k\in \until{n}}$\\
$\dist_{\min}$ & $\min\setdef{\dist_k}{k\in \until{n}}$\\
$\Omega$ & space of vehicle routing policies\\
$\map{N_k}{\Omega}{\naturals \union \{+\infty\}}$ & observations required for detection at $\mc R_k$\\
$\map{\delta_k}{\Omega}{\real_{>0}\union \{+\infty\}}$ & detection delay at $\mc R_k$\\
$\map{\subscr{\delta}{avg}}{\Omega}{\real_{>0}\union \{+\infty\}}$ & $\sum_{k=1}^n w_k \expt[\delta_k(\omega)]$\\
$\map{\subscr{\delta}{upper}}{\Omega}{\real_{>0}\union \{+\infty\}}$ & upper bound to $\subscr{\delta}{avg}$\\
 $ \delta_k^{m\textup{-min}}$& $\inf \setdef{\expt[\delta_k(\omega)]}{\omega \in \Omega}$\\
 $ \gav^{m\textup{-min}}$ & $\inf \setdef{\gav(\omega)}{\omega \in \Omega }$\\
 $\Lambda^j_{\tau}$ & CUSUM statistic at $\mc R_j$ at $\tau$-th iteration\\
 $\eta$ & CUSUM threshold\\
 $\etab$ & $e^{-\eta} +\eta -1$\\
 $\Delta_{n-1}$ & probability simplex in $\real^n$\\
 $\q \in \Delta_{n-1}$ & single vehicle randomized routing policy\\
$\q^* \in \Delta_{n-1}$ & optimal $\q$\\
$\q^\dag \in \Delta_{n-1}$ & efficient $\q$\\
$\vec \q_m \in \Delta_{n-1}^m$ & $m$ vehicle randomized routing policy\\
$\subscr{\vec \q}{part} \in \Delta_{n-1}^m$ & $\vec \q_m $ with region partitioning\\
$\boldsymbol{a} \in \Delta_{n-1}$ & single vehicle adaptive routing policy\\        
$\subscr{\boldsymbol{a}}{part} \in \Delta_{n-1}^m$ & $m$ vehicle adaptive routing policy \\
& \hfill  with region partitioning\\
\hline    
\end{tabular} 
}
\end{table}

\begin{remark}[\bit{Randomized routing policy}]
  The randomized routing policy samples regions to visit from a
  stationary distribution; this assumes that each region can be
  visited from another region in a single hop. While this is the case
  for aerial vehicles, it may not be true for ground vehicles. In the
  latter case, the motion from one region to another can be modeled as
  a Markov chain. The transition probabilities of this Markov chain
  can be designed to achieve a desired stationary distribution. This
  can optimally be done, for instance, by picking the fastest mixing
  Markov chain proposed in~\cite{SB-PD-LX:04} or heuristically by
  using the standard Metropolis-Hastings algorithm~\cite{LW:04}. Related examples are presented in Section
  \ref{sec:numerical} and \ref{sec:experimental-results}.  It should
  be noted that under the randomized routing policy, the desired
  stationary distribution of the Markov chain is fixed, and the Markov
  chain converges to this distribution exponentially.  Thus, the
  policy designed using Markov chain is arbitrarily close to the
  desired policy. However, in the case of adaptive routing policy, the
  desired stationary distribution keeps on changing, and the
  performance of the Markov chain based policy depends on rate of
  convergence of the Markov chain and the rate of change of desired
  stationary distribution.  \oprocend
\end{remark}


\section{Spatial Quickest
  Detection}\label{sec:spatial-quickest-detection} In this section we
propose the ensemble CUSUM algorithm for the simultaneous quickest
detection of anomalies in spatially distributed regions. We start by
recalling the standard quickest change detection problem. Then we
describe and characterize the ensemble CUSUM algorithm.

\subsection{Quickest change detection}\label{subsec:cusum}
Consider a set of observations $\{y_1,y_2,\ldots\}$, where, for some
$\nu$, the observations $\{y_1,\ldots,y_{\nu-1}\}$ are i.i.d. with
probability density function $f^0$, and $\{y_\nu, y_{\nu +1},\ldots\}$
are i.i.d. with probability density function $f^1$.  The objective of
the quickest change detection is to detect the change in the
underlying distribution in minimum number of observations subject to a
desired lower bound on the number of samples between two false alarms.
Let $N \ge \nu$ be the observation at which the change is
detected. The non-Bayesian quickest detection problem~\cite{HVP-OH:08,
  DS:85}, is posed as
\begin{align}\label{eq:non-bayesian}
  \begin{split}
    \minimize & \quad  \sup_{\nu\ge 1}  \expt_{\nu} [N -\nu +1 | N\ge \nu]\\
    \subject & \quad \expt_{f^0}[N] \ge 1/ \gamma,
  \end{split}
\end{align}
where $\expt_\nu[\cdot]$ represents expected value with respect to the
observations distribution at iteration $\nu$ and
$\gamma>0$ is a small constant called {\it false alarm
  rate.}

An algorithmic solution to the minimization
problem~\eqref{eq:non-bayesian} is the cumulative sum (CUSUM)
algorithm~\cite{HVP-OH:08}, in which, at each iteration $\tau \in
\naturals$, (i) an observation $y_\tau$ is collected, (ii) the statistic
$\Lambda_{\tau}= \big ( \Lambda_{\tau-1}
+\log\frac{f^1_{k}(y_{\tau})}{f^0_{k}(y_{\tau})}\big)^+$ with
$\Lambda_0=0$ is computed, and (iii) a change is declared if
$\Lambda_{\tau}>\eta$. For a given threshold $\eta$, the false alarm
rate and the worst expected number of observations for CUSUM algorithm
are
\begin{align}\label{eq:cusum-delay}
  \expt_{f^0}(N) \approx \frac{ e^{\eta}-\eta - 1}{\dist(f^0,f^1)} \text{ and }
  \expt_{f^1}(N) \approx \frac{e^{-\eta}+\eta - 1}{\dist(f^1,f^0)}.
\end{align}
The approximations in equation~\eqref{eq:cusum-delay} are referred to as
the Wald's approximations~\cite{DS:85}, and are known to be accurate for
large values of the threshold $\eta$.  In the following, we assume that the
chosen threshold is large enough and the expressions in
equation~\eqref{eq:cusum-delay} are exact.  Let $u>0$ be the
uniform time duration between two iterations of the CUSUM algorithm. The
\emph {expected detection delay} $\delta$, i.e., the expected time required
to detect an anomaly after its appearance, satisfies $\expt_{f^1}[\delta]=
u \expt_{f^1}(N)$.

\subsection{Ensemble CUSUM algorithm}
We run $n$ parallel CUSUM algorithms (one for each region), and update
the CUSUM statistic for region $\mc R_k$ only if an observation is
received from region $\mc R_k$.  We refer to such parallel CUSUM
algorithms by {\it ensemble CUSUM algorithm}
(Algorithm~\ref{algo:ensemble-cusum}). Notice that an iteration of
this algorithm is initiated by the collection of an observation.

\IncMargin{.3em}
\begin{algorithm}[t]
  {\footnotesize
   \SetKwInOut{Input}{Input}
   \SetKwInOut{Set}{Set}
   \SetKwInOut{Title}{Algorithm}
   \SetKwInOut{Require}{Require}
   \SetKwInOut{Output}{Output}
   \Input{threshold $\eta$, pdfs $f^0_k, f^1_k, k\in \until{n}$ \;}
   \Output{decision on presence of an anomaly \;}

   \medskip

   \nl at time $\tau$ receive observation $y_\tau$ for region $\mc R_k$\;

   \smallskip

   \nl update the CUSUM statistic at each region:
   
   \[
   \Lambda_{\tau}^{j}= 
   \begin{cases} 
     \Big ( \Lambda_{\tau-1}^{k}
     +\log\frac{f^1_{k}(y_{\tau})}{f^0_{k}(y_{\tau})}\Big)^+, &
     \text{if } j=k;\\
     \Lambda_{\tau-1}^{j}, & \text{if }
     j\in\until{n}\setminus\{k\};
   \end{cases}
   \]
   
   \smallskip
 
  \nl \lIf{$\Lambda_{\tau}^k >\eta$}{change detected at region $\mc
    R_k$ \;}
  \smallskip
  \nl \lElse{wait for next observations and iterate.}

  \medskip

  

    \caption{\textit{Ensemble CUSUM Algorithm}}
  \label{algo:ensemble-cusum}}
\end{algorithm} 
\DecMargin{.3em}

We are particularly interested in the performance of the ensemble CUSUM
algorithm when the observations are collected by autonomous vehicles. In
this case, the performance of the ensemble CUSUM algorithm is a function of
the vehicle routing policy.  For the ensemble CUSUM algorithm with
autonomous vehicles collecting observation, let the number of iterations
(collection of observations) required to detect an anomaly at region $\mc
R_k$ be $\map{N_k}{\Omega}{\naturals\union\{+\infty\}}$, and let the
detection delay, i.e., the time required to detect an anomaly, at region
$\mc R_k$ be $\map{\delta_k}{\Omega}{\real_{>0}\union\{+\infty\}}$, for
each $k\in \until{n}$, where $\Omega$ is the space of vehicle routing
policies.  We also define average detection delay as follows:
\begin{definition}[\bit{Average detection delay}]\label{def:detect-delay}
  For any vector of weights $(w_1,\ldots, w_n)\in \Delta_{n-1}$, define the
  average detection delay $\map{\gav}{\Omega}{\real_{>0} \union
    \{+\infty\}}$ for the ensemble CUSUM algorithm with autonomous vehicles
  collecting observations by
  \begin{equation}\label{eq:average-detection-delay-definition}
  \gav(\omega) = \sum_{k=1}^n w_k \expt[\delta_k (\omega)].
  \end{equation}
\end{definition}

For the ensemble CUSUM algorithm with $m$ vehicles collecting observation,
define $\delta_k^{m\textup{-min}}$ and $\gav^{m\textup{-min}}$ by
\begin{align*}
  \delta_k^{m\textup{-min}}&= \inf \setdef{\expt[\delta_k(\omega)]}{\omega \in \Omega}, \text{ and}\\
  \gav^{m\textup{-min}} &= \inf \setdef{\gav(\omega)}{\omega \in \Omega },
\end{align*}
respectively.  Note that $\delta_k^{m\textup{-min}}$ and
$\gav^{m\textup{-min}}$ are lower bounds for the expected detection
delay and average detection delay at region $\mc R_k$, respectively,
independently of the routing policy.  Let $\etab= e^{-\eta}+\eta - 1$.
We now state lower bounds on the performance of the ensemble CUSUM
algorithm with autonomous vehicles collecting observations.

\begin{lemma}[\bit{Global lower bound}] \label{lem:global-lower-bound}
  The following statements hold for the ensemble CUSUM algorithm with
  $m$ vehicles collecting information:
\begin{enumerate}
\item the lower bound $\delta_k^{m\textup{-min}}$ for the expected
  detection delay at region $\mc R_k$ satisfies
\[
\delta_k^{m\textup{-min}} \ge \frac{\etab\; \Tsmall_k}{m\dist_k};
\]
\item the lower bound $\gav^{m\textup{-min}}$ for the average detection delay
  satisfies
\[
\gav^{m\textup{-min}}  \ge \frac{\etab\; \Tsmall_{\min} }{m\dist_{\max}},
\]
where $\Tsmall_{\min} = \min \setdef{\Tsmall_k}{k\in \until{n}}$.
\end{enumerate}
\end{lemma}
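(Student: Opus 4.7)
The plan is to combine two essentially independent ingredients: the Wald-approximation identity \eqref{eq:cusum-delay} for how many observations the CUSUM at $\mathcal{R}_k$ needs before it fires, and a conservation/throughput bound on how fast $m$ vehicles can possibly deliver those observations to region $\mathcal{R}_k$.

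First I would isolate the CUSUM running on region $\mathcal{R}_k$. Since $\Lambda^k_\tau$ is updated only when a fresh observation from $\mathcal{R}_k$ arrives, and by Assumption (A5) observations at $\mathcal{R}_k$ are i.i.d.\ under $f^1_k$, the routing policy $\omega$ affects only \emph{when} these observations are fed to the CUSUM, not the number $N_k$ required for it to trigger. The identity \eqref{eq:cusum-delay}, which we are entitled to treat as exact, therefore yields $\mathbb{E}_{f^1_k}[N_k]=\bar\eta/\mathcal{D}_k$ irrespective of $\omega$.

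Next I would prove a throughput lower bound. Each observation collected at $\mathcal{R}_k$ consumes a vehicle for an expected amount $\bar T_k$ of processing time, and the $m$ vehicles jointly deliver at most $m$ units of vehicle-time per unit of wall-clock time. Hence, regardless of $\omega$, the expected wall-clock time needed to receive $N$ observations at $\mathcal{R}_k$ is at least $N\bar T_k/m$. Substituting $N=\bar\eta/\mathcal{D}_k$, and then using the elementary inequality $\bar T_k^{m\text{-smlst}}\le \bar T_k$ (the minimum of $m$ i.i.d.\ copies of $T_k$ is pointwise dominated by any one of them), gives
\[
\mathbb{E}[\delta_k(\omega)]\ \ge\ \frac{\bar\eta\,\bar T_k}{m\,\mathcal{D}_k}\ \ge\ \frac{\bar\eta\,\bar T_k^{m\text{-smlst}}}{m\,\mathcal{D}_k}
\]
for every $\omega\in\Omega$, and taking the infimum proves statement (i). Statement (ii) is then immediate: plugging (i) into \eqref{eq:average-detection-delay-definition}, replacing $\bar T_k^{m\text{-smlst}}$ by $\bar T_{\min}^{m\text{-smlst}}$ and $\mathcal{D}_k$ by $\mathcal{D}_{\max}$ decouples the region index from the weights, after which $\sum_k w_k=1$ finishes the calculation.

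The only genuinely subtle point is the Wald-identity step: strictly speaking $N_k$ is a stopping time with respect to a filtration that also carries random routing decisions, so one must check that the routing does not bias the conditional distribution of the $N_k$ observations actually received from $\mathcal{R}_k$ away from $f^1_k$. Assumptions (A5) and (A6) (i.i.d.\ observations within a region, independence across regions) are precisely what makes the classical CUSUM expected-sample-size identity carry over unchanged; everything else then reduces to the two bounds above and a trivial convex combination.
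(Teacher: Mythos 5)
Your proof is correct at the paper's level of rigor (Wald's approximations treated as exact, observations in progress at the change time ignored), but your key step is genuinely different from the paper's. Both arguments start from the routing-independent identity $\expt[N_k]=\etab/\dist_k$. To convert the observation count into time, the paper constructs a minorizing stochastic process: it parks all $m$ vehicles at $\mc R_k$, replaces each vehicle's $b$-th processing time by the batch minimum $T^{m\text{-smlst}}_k(b)$, so that $\delta_k(\omega)\ge\sum_{b=1}^{\lceil N_k/m\rceil}T^{m\text{-smlst}}_k(b)$, and then applies Wald's identity to this sum of $\lceil N_k/m\rceil$ terms; this is exactly where $\Tsmall_k$ enters. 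You instead use a work-conservation (throughput) argument: the $N_k$ delivered observations from $\mc R_k$ require total processing time $\sum_{i=1}^{N_k}T_k^{(i)}$, while the $m$ vehicles supply at most $m\delta_k$ units of vehicle-time, so $m\,\expt[\delta_k(\omega)]\ge\expt\big[\sum_{i=1}^{N_k}T_k^{(i)}\big]=\expt[N_k]\,\Tbar_k$ by Wald's identity (this, rather than "substituting $N=\etab/\dist_k$" into a deterministic formula, is the correct justification, and it uses that the processing times are independent of the observation values that determine the stopping time $N_k$ --- worth stating explicitly). Your route in fact yields the stronger intermediate bound $\expt[\delta_k(\omega)]\ge\etab\,\Tbar_k/(m\dist_k)$, which you then deliberately relax via $\Tsmall_k\le\Tbar_k$ to match the lemma; the paper's weaker $\Tsmall_k$ form is what its modified-process technique naturally produces and is the device it reuses later (e.g., the quantity $\Tone$ in Theorem~\ref{thm:stationary-lower-bound}), whereas your conservation argument is more elementary and, for statement (i), sharper. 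Statement (ii) is handled identically in both proofs, by substituting the per-region bound into the definition of $\gav$ and using $\sum_k w_k=1$.
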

\begin{proof}
  We start by establishing the first statement. We note that a lower bound
  on the expected detection delay at region $\mc R_k$ is obtained if all
  the vehicles always stay at region $\mc R_k$.  Since, each observation is
  collected from region $\mc R_k$, the number of iterations of the ensemble
  CUSUM algorithm required to detect an anomaly at region $\mc R_k$
  satisfies $\expt[N_k]=\etab/\dist_k$.
  Let $T_k^r(b)$ be realized value of the processing time of vehicle $\mc
  U_r$ at its $b$-th observation. It follows that
  $T^{m\text{-smlst}}_k(b)=\min\setdef{T_k^r(b)}{r\in \until{m}}$ is a
  lower bound on the processing time of each vehicle for its $b$-th
  observation. Further, $T^{m\text{-smlst}}_k(b)$ is identically
  distributed for each $b$ and
  $\expt[T^{m\text{-smlst}}_k(b)]=\Tsmall_k$. Consider a modified
  stochastic process where the realized processing time of each vehicle for
  its $b$th observation in $T^{m\text{-smlst}}_k(b)$. Indeed, such a
  stochastic process underestimates the time required to collect each
  observation and, hence, provides a lower bound to the expected detection
  delay.
  Therefore, the detection delay satisfies the following bound
  \[
  \delta_k(\omega) \ge \sum_{b=1}^{\lceil N_k/m\rceil} T^{m\text{-smlst}}_k(b), \text{ for each } \omega \in \Omega.
  \] 
  It follows from Wald's identity~\cite{SIR:99}, that
  \[
  \expt[\delta_k(\omega)] \ge \Tsmall_k \expt[\lceil N_k/m\rceil] \ge \Tsmall_k  \expt[N_k] /m.
  \]
  This proves the first statement. 
  
  The second statement follows from
  Definition~\ref{def:detect-delay} and the first statement.
\end{proof}
\begin{remark}[\bit{Dependence across regions}]\label{remark:dependence}
  We assumed that the observations collected from different regions
  are mutually independent. If the observations from different regions
  are dependent, then, at each iteration, instead of updating only one
  CUSUM statistic, the CUSUM statistic at each region should be
  updated with an appropriate marginal distribution.  \oprocend
\end{remark}

\section{Randomized Ensemble
  CUSUM Algorithm}\label{sec:randomized-ensemble-cusum}
We now study the persistent surveillance problem under randomized
ensemble CUSUM algorithm. First, we 
derive an exact expression for the expected detection delay for the
randomized ensemble CUSUM algorithm with a single vehicle, and 
use the derived expressions to develop an efficient stationary policy for a single
vehicle.
 Second, we  develop a lower bound 
on the expected detection delay for the randomized ensemble CUSUM algorithm with multiple vehicles, and 
develop a generic partitioning policy that (i)
constructs a complete and disjoint $m$-partition of the regions, (ii)
allocates one partition each to a vehicle, and (iii) lets each vehicle
survey its assigned region with some single vehicle policy.  Finally,
we show that the partitioning policy where each vehicle implements the
efficient stationary policy in its regions is within a  factor
of an optimal policy.

\subsection{Analysis for single vehicle}
Consider the randomized ensemble CUSUM algorithm with a single vehicle.
Let $q_k\in {[0,1]}$ denote the probability to select region
$\mathcal{R}_k$, and let $\q=(q_1,\ldots,q_n) \in \Delta_{n-1}$.  Let
the threshold for the CUSUM algorithm at each region be uniform and
equal to $\eta >0$. We note that for the randomized
ensemble CUSUM algorithm with a single vehicle the space of vehicle
routing policies is $\Omega = \Delta_{n-1}$.

\begin{theorem}[\bit{Single vehicle randomized ensemble
    CUSUM}] \label{thm:ensemble-cusum} For the randomized ensemble
  CUSUM algorithm with a single vehicle and stationary routing policy
  $\q\in \Delta_{n-1}$, the following statements hold:
\begin{enumerate}
\item the number of observations $N_k(\q)$ required to detect a change
  at region $\mc R_k$ satisfies
\begin{align*}
  \expt_{f^1_k}[N_{k}(\q)] = \frac{\etab}{q_k \dist_k};
\end{align*}
\item the detection delay $\delta_{k}(\q)$ at region $\mc R_k$ satisfies
  \begin{align*}
    \expt_{f^1_k}[\delta_{k}(\q)] = \Big(\sum_{i=1}^n q_i\Tbar_i +
    \sum_{i=1}^n \sum_{j =1}^n q_iq_j d_{ij}\Big)
    \expt_{f^1_k}[N_{k}(\q)].
  \end{align*}
\end{enumerate}
\end{theorem}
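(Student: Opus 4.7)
The plan is to prove both statements as consequences of Wald's identity applied to two different renewal structures induced by the randomized policy.

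For statement (i), I would first observe that under policy $\q$ the regions visited at consecutive iterations, $K_1,K_2,\ldots$, are i.i.d.\ with $\prob(K_\ell = k) = q_k$, and that the CUSUM statistic at region $\mc R_k$ is updated only on the subsequence of iterations with $K_\ell = k$. Since these are exactly the iterations on which observations from $f^1_k$ are drawn, the effective CUSUM at $\mc R_k$ sees an i.i.d.\ stream from $f^1_k$, so by the Wald approximation in~\eqref{eq:cusum-delay} the expected number of $\mc R_k$-observations needed for the statistic to exceed $\eta$ is $M_k := \etab/\dist_k$. Now $N_k(\q)$ is the first iteration $N$ at which $\sum_{\ell=1}^N \mathbb{1}(K_\ell = k) = M_k$; this is a stopping time for the i.i.d.\ Bernoulli$(q_k)$ sequence $\mathbb{1}(K_\ell=k)$. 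Applying Wald's identity yields $q_k \expt[N_k(\q)] = \expt[M_k] = \etab/\dist_k$, which rearranges to the claim.

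For statement (ii), I would decompose the detection delay into a sum over iterations. Let $\tau_\ell$ be the elapsed time between the reception of the $(\ell-1)$-th and $\ell$-th observations; then $\tau_\ell = d_{K_{\ell-1}K_\ell} + T_{K_\ell}$, where $T_{K_\ell}$ is the processing time of the $\ell$-th observation. Using the independence of $K_{\ell-1}$ and $K_\ell$ and the independence of $K_\ell$ and $T_{K_\ell}$, the per-iteration expectation is
\begin{equation*}
\expt[\tau_\ell] = \sum_{i=1}^n q_i \Tbar_i + \sum_{i=1}^n\sum_{j=1}^n q_i q_j d_{ij}.
\end{equation*}
Since $\delta_k(\q) = \sum_{\ell=1}^{N_k(\q)} \tau_\ell$ and $N_k(\q)$ is a stopping time with respect to the filtration generated by $(K_\ell, T_{K_\ell}, y_\ell)$, a second application of Wald's identity gives $\expt[\delta_k(\q)] = \expt[\tau_1]\,\expt[N_k(\q)]$, which is the desired identity.

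The main subtlety is the legitimacy of the two Wald applications. In each case I would verify that the summands have finite mean (trivial, since $q_k > 0$, $\dist_k > 0$, and $\Tbar_i, d_{ij}$ are finite) and that the relevant stopping time is almost surely finite. For (i) this reduces to noting that $M_k$ has finite mean and that $\mc R_k$ is visited infinitely often almost surely, so $N_k(\q)$ is a.s.\ finite. A minor additional care point in statement (ii) is that $\tau_\ell$ depends on $K_{\ell-1}$ as well as $K_\ell$, so the $\tau_\ell$ are not independent across $\ell$; however, they do form a stationary sequence adapted to the filtration, and only the common value $\expt[\tau_\ell]$ enters Wald's identity, which remains valid in this setting.
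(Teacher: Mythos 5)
Your proof is correct and reaches the paper's conclusions by a mildly different route for statement (i) and by essentially the paper's own route for statement (ii). For (i), the paper folds the routing randomness into the CUSUM increments: it defines the per-iteration increment at $\mc R_k$ to be the log-likelihood ratio with probability $q_k$ and zero otherwise, notes these are i.i.d.\ with mean $q_k\dist_k$ (and still behave like log-likelihood ratios, since $\expt[e^{-\lambda}]=1$, so the CUSUM/Wald analysis applies to the diluted statistic), and reads off $\etab/(q_k\dist_k)$ in one step. You instead keep the region-$k$ CUSUM intact, get $\etab/\dist_k$ informative observations from the standard formula, and convert observations to iterations via Wald's identity on the Bernoulli$(q_k)$ visit indicators; this is equivalent, and it makes the stopping-time bookkeeping somewhat more explicit than the paper's ``follows similar to the proof for CUSUM'' step. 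For (ii) your argument coincides with the paper's: the same decomposition of each iteration into processing plus travel time, the same expectation $\sum_{i} q_i \Tbar_i + \sum_{i}\sum_{j} q_i q_j d_{ij}$, and a final Wald identity. One caveat on your closing remark: for the travel part, the conditional mean of $\tau_\ell$ given the past is not constant (it depends on the current region $K_{\ell-1}$), and the event $\{N_k \ge \ell\}$ is correlated with $K_{\ell-1}$, so Wald's identity applied to the travel-time sum is itself an approximation rather than an exact identity; the paper makes the same silent simplification (consistent with its convention of treating the Wald approximations as exact), so your proof is at the same level of rigor, but the assertion that Wald ``remains valid in this setting'' for a merely stationary adapted sequence overstates the case slightly.
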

\begin{proof}
  Let $\tau\in\until{N_{k}}$ be the iterations at which the vehicle
  collects and sends information about the regions, where $N_k$
  denotes the iteration at which an anomaly is detected at region $\mc
  R_k$. Let the log likelihood ratio at region $\mc R_k$ at iteration 
  $\tau$ be $\lambda_\tau^{k}$. We have
  \begin{align*}
    \lambda_\tau^{k}=
    \begin{cases}
      \log \frac{f^1_k(y_\tau)}{f^0_k (y_\tau)}, & \text{ with probability } q_{k},\\
      0, & \text{ with probability } 1-q_{k}.
    \end{cases}
  \end{align*}
  Therefore, conditioned on the presence of an anomaly,
  $\seqdef{\lambda_\tau^{k}}{\tau\in\naturals}$ are i.i.d., and
  \begin{align*}
    \expt_{f^1_k}[\lambda_\tau^{k}]=q_k \dist_k.
  \end{align*}
  The remaining proof of the first statement follows similar to the
  proof for CUSUM in \cite{DS:85}.
  
  To prove the second statement, note that the information aggregation
  time $T^{\text{agr}}$ comprises of the processing time and the
  travel time.  At an iteration the vehicle is at region $\mc R_i$ with
  probability $q_i$ and picks region $\mc R_j$ with probability
  $q_j$. Additionally, the vehicle travels between the two regions in $d_{ij}$
  units of time.  Thus, the average travel time at each iteration is
  \begin{align*}
    \expt[T_{\text{travel}}] = \sum_{i=1}^n \sum_{j =1}^n q_iq_j d_{ij}.
  \end{align*}
  Hence, the expected information aggregation time at each iteration is 
  \begin{align*}
    \expt[T^{\text{agr}}]=\expt[T_{\text{travel}}+T_{\text{process}}] =
    \sum_{i=1}^n \sum_{j =1}^n q_iq_j d_{ij} +
    {\sum_{i=1}^n  q_i\Tbar_i} .
  \end{align*}
  Let $\seqdef{{T_\tau^{\text{agr}}}}{\tau\in\until{N_{k}}}$, be
  the information aggregation times at each iteration. We have that
  $\delta_k =\sum_{\tau=1}^{N_{k}} T^{\text{agr}}_\tau$,
  and it follows from Wald's identity~\cite{SIR:99}, that
  \begin{align*}
    \expt[\delta_k]= \expt[T^{\text{agr}}] \expt[N_k].
  \end{align*}
  This completes the proof of the statement.
\end{proof}

\subsection{Design for single vehicle}
Our objective is to design a stationary policy that simultaneously
minimizes the detection delay at each region, that is, to design a
stationary policy that minimizes each term in $(\delta_1(\q),\ldots
,\delta_n(\q))$ simultaneously.  For this multiple-objective
optimization problem, we construct a single aggregate objective
function as the average detection delay. After incorporating the
expressions for the expected detection delays derived in
Theorem~\ref{thm:ensemble-cusum}, the average detection delay becomes
\begin{align}\label{eq:avg}
  \gav(\q)= \Big(\sum_{k=1}^n \frac{w_k \etab}{q_k \dist_k }\Big)
  \Big({{\sum_{i=1}^n q_iT_i}} + \sum_{i=1}^n \sum_{j=1}^n q_iq_j
  d_{ij}\Big),
\end{align}
where $w_k= \pi_k /(\sum_{i=1}^n \pi_i)$ is the weight on the expected
detection delay at region $\mc R_k$ and $\pi_k$ is the prior probability of
an anomaly being present at region $\mc R_k$. Our objective is to solve the
average detection delay minimization problem:
\begin{align}\label{eq:original-objective-function}
\begin{split}
\underset{\q\in\Delta_{n-1}}{\minimize} & \quad  \gav(\q).
\end{split}
\end{align}

\begin{figure}
    \centering
    \includegraphics[width=.5\columnwidth]{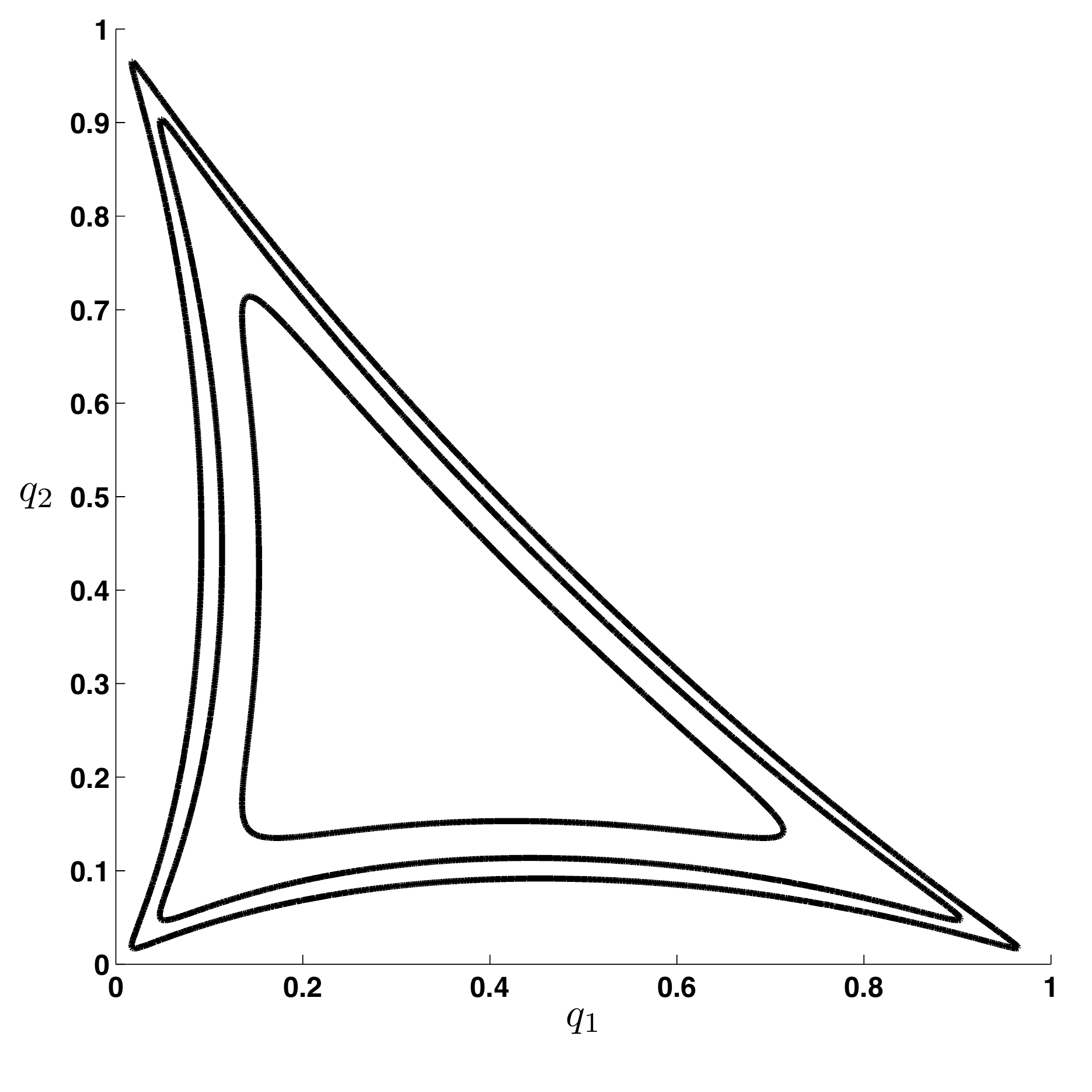}
    \caption{Level-sets of the objective function in
      problem~\eqref{eq:original-objective-function}. It can be seen
      that the level sets are not convex.}
    \label{fig:contours}
\end{figure}

In general, the objective function $\gav$ is non-convex. For
instance, let $n = 3$, and consider the level sets of $\gav$ on the
two dimensional probability simplex (Fig. \ref{fig:contours}).  It can
be seen that the level sets are non-convex, yet there exists a unique
critical point and it corresponds to a minimum. We now state the following
conjecture about the average detection delay:
\begin{conjecture}[\bit{Single vehicle optimal stationary policy}]\label{conj:unique}
  For the randomized ensemble CUSUM algorithm with a single vehicle, the
  average detection delay function $\gav$ has a unique critical point
  at which the minimum of $\gav$ is achieved. \oprocend.
\end{conjecture}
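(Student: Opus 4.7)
The plan is to reduce the problem to an interior minimization via coercivity and then establish uniqueness of the critical point through a second-order analysis combined with a topological argument.

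\textbf{Existence of an interior minimizer.} Let $A(\q) := \sum_{k=1}^n w_k \etab/(q_k \dist_k)$ and $B(\q) := \sum_i q_i \Tbar_i + \sum_{i,j} q_i q_j d_{ij}$, so that $\gav = AB$. On the closure of $\Delta_{n-1}$, $A(\q) \to +\infty$ as any $q_k \to 0^+$, while $B(\q) \ge \Tbar_{\min} > 0$; hence $\gav$ is continuous on the relative interior of $\Delta_{n-1}$ and diverges at every face. By compactness, $\gav$ attains its infimum in the relative interior, and that minimizer is a critical point.

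\textbf{Second-order analysis at critical points.} For uniqueness, I would write the KKT conditions for the constraint $\sum_k q_k = 1$: at a critical point $\q^*$ there exists $\lambda$ with $B \nabla A + A \nabla B = \lambda \mathbf{1}$, equivalently $\nabla A = -(A/B)\nabla B + (\lambda/B)\mathbf{1}$. Substituting this into
$$
\nabla^2 \gav = B \nabla^2 A + A \nabla^2 B + \nabla A (\nabla B)^\top + \nabla B (\nabla A)^\top,
$$
and restricting to tangent directions $v \in \real^n$ with $\sum_k v_k = 0$ (so the $\mathbf{1}$ contribution drops out), I obtain
$$
v^\top \nabla^2 \gav\, v = B\, v^\top \nabla^2 A\, v + A\, v^\top \nabla^2 B\, v - \frac{2A}{B}(v^\top \nabla B)^2.
$$
The first term is strictly positive because $\nabla^2 A = \operatorname{diag}\big(2 w_k \etab/(q_k^3 \dist_k)\big)$ is positive definite. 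The goal is to prove the right-hand side is positive for all nonzero tangent $v$, which would make every critical point a strict local minimum.

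\textbf{Main obstacle and concluding topological argument.} The difficulty lies in the signs of the last two terms: the travel-time matrix $[d_{ij}]$ is typically not positive semidefinite, so $v^\top \nabla^2 B\, v$ may be negative, and the rank-one subtraction $-(2A/B)(v^\top \nabla B)^2$ always has the wrong sign. A direct Cauchy-Schwarz bound does not obviously close, and I expect this step to be the crux of the proof. Two plausible routes are (a) to impose additional structure on $[d_{ij}]$ (for instance symmetry and triangle inequality, or boundedness relative to the processing times $\Tbar_k$) and bound the negative contributions by the strict convexity of $A$, or (b) to establish a probabilistic statement: for problem data drawn from natural distributions, the restricted Hessian is positive definite with high probability, matching the ``probabilistic guarantees'' announced by the authors. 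Once every critical point is known to be a nondegenerate strict local minimum, a mountain-pass argument on the simply connected interior of $\Delta_{n-1}$, exploiting coercivity at the boundary, forces the critical set to be a singleton and thereby yields the conjectured uniqueness.
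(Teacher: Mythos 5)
First, note that the statement you are asked to prove is labeled a \emph{conjecture} in the paper: the authors do not prove it either. Their only support is the Appendix, where they take a purely empirical scenario-sampling route: they draw $N_1=1000$ random problem instances (random $n$, region locations, processing times, weights) together with random initial points, run gradient descent from the random initial point and from the uniform point, and invoke a sample-complexity bound to conclude that, with confidence $99.99\%$ and probability $99\%$, gradient descent reaches the same point from any initialization, i.e.\ the problem has a unique critical point. So the paper offers no analytic argument at all, and your proposal should be judged as an attempt at something strictly stronger than what the paper establishes.

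As a proof, your proposal has a genuine gap, and you correctly identify where it is: the claim that every critical point is a nondegenerate strict local minimum is never established. Your reduction is fine up to that point --- coercivity of $A$ near the faces of the simplex (all $w_k>0$ since $\pi_k\in(0,1)$) plus $B\ge \Tbar_{\min}>0$ gives an interior minimizer, and the restricted-Hessian identity
\[
v^\top \nabla^2 \gav\, v \;=\; B\, v^\top \nabla^2 A\, v + A\, v^\top \nabla^2 B\, v - \tfrac{2A}{B}\,(v^\top \nabla B)^2,
\qquad \textstyle\sum_k v_k = 0,
\]
is correct, as is the observation that once nondegeneracy of all critical points is known, a mountain-pass/Morse-type argument on the coercive interior problem forces uniqueness. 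But the sign of the last two terms is exactly the obstruction: $[d_{ij}]$ (a Euclidean-distance-type matrix) is generically not positive semidefinite, and the rank-one term is always nonpositive, so positivity of the quadratic form does not follow from the strict convexity of $A$ without additional quantitative assumptions relating $d_{\max}$ to $\Tbar_{\min}$ and the $w_k/\dist_k$. The paper's own Figure of the level sets shows $\gav$ is non-convex, so no soft convexity argument can rescue the step; some genuinely new estimate (or extra hypotheses, as in your route (a)) is required. Your route (b) --- a probabilistic statement that the restricted Hessian is positive definite at critical points for random data --- is also not what the paper does: the authors' probabilistic guarantee concerns the empirical coincidence of gradient-descent limits over sampled instances, not any property of the Hessian. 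In short, your skeleton (coercivity, KKT, Hessian restriction, Morse-theoretic uniqueness) is a reasonable deterministic program that goes beyond the paper, but the central step is missing, so the conjecture remains unproved by your argument just as it is unproved in the paper.
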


In the Appendix we provide probabilistic guarantees that, for a particular stochastic model of the parameters in
$\gav$, with at least
confidence level $99.99\%$ and probability at least $99\%$, the
optimization problem~\eqref{eq:original-objective-function} has a
unique critical point at which the minimum is achieved. Such a minimum 
can be computed via standard gradient-descent methods~\cite{SB-LV:04}.

We now construct an upper bound for the expected detection delay. We
will show that minimization of this upper bound yields a policy that
is within a factor of an optimal policy. From equation
\eqref{eq:avg}, we define the upper bound $\map{\subscr{\delta
  }{upper}}{\Delta_{n-1}}{\real_{>0}\union\{+\infty\}}$ as
\[
\gav(\q) \le \subscr{\delta}{upper}(\q ) = \Big(\sum_{k=1}^n
\frac{w_k\etab}{q_k \dist_k}\Big) (\Tbar_{\max}+d_{\max}),
\]
where $d_{\max}=\max\setdef{d_{ij}}{i,j\in\until{n}}$.

\begin{theorem}[\bit{Single vehicle efficient stationary
    policy}] \label{thm:ave-detection-delay}
  The following statements hold for the randomized ensemble CUSUM
  algorithm with single vehicle:
\begin{enumerate}
\item the upper bound on the expected detection delay satisfies
\begin{align*}
  \min_{\q \in \Delta_{n-1}} \subscr{\delta}{upper} (\q) &=
  \Big(\sum_{k=1}^n \sqrt{\frac{w_k}{\dist_k}}\Big)^2\etab
  (\Tbar_{\max} +d_{\max}),
\end{align*}
and the minimum is achieved at $\q^{\dag}$ defined by
\[
q^\dag_k = \frac{\sqrt{w_k/\dist_k}}{\sum_{j=1}^n \sqrt{w_j/\dist_j}} , \; k\in
\until{n};
\]
\item the average detection delay satisfies the following lower bound
\[
\gav(\q) \ge \Big(\sum_{k=1}^n \sqrt{\frac{w_k}{\dist_k}}\Big)^2 \etab\; \Tbar_{\min}, 
\]
for all $\q \in \Delta_{n-1}$;
\item the  stationary policy $\q^{\dag}$ is within a factor of
  optimal, that is
  \begin{align*}
  \frac{\gav(\q^\dag)}{\gav(\q^*)} &\le \frac{\Tbar_{\max}+d_{\max}}{\Tbar_{\min}}, \text{ and }\\
  \frac{\gav(\q^\dag)}{\gav^{1\textup{-min}}} & \le n \frac{\Tbar_{\max}+d_{\max}}{\Tbar_{\min}}\frac{\dist_{\max}}{\dist_{\min}},
  \end{align*}
where $\q^*$ is an optimal stationary policy;
\item the expected detection delay at region $\mc R_k$ under policy
  $\q^{\dag}$ satisfy
\[
\frac{\expt[\delta_k(\q^{\dag})]}{\delta^{1\textup{-min}}_k} \le \frac{(\Tbar_{\max}+d_{\max})}{\Tbar_k} \sqrt{\frac{n \dist_k} {w_k
    \dist_{\min}}}.
\]
\end{enumerate}
\end{theorem}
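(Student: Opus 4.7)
The plan is to treat the four statements in order, since later parts rely on the earlier ones, and to use two workhorses throughout: Cauchy--Schwarz (applied in several guises) and the global lower bound from Lemma~\ref{lem:global-lower-bound}.

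For statement (i), I would minimize $\sum_k w_k/(q_k \dist_k)$ over the simplex. By the Cauchy--Schwarz inequality,
\[
\Big(\sum_{k=1}^n \sqrt{w_k/\dist_k}\Big)^2 = \Big(\sum_{k=1}^n \sqrt{w_k/(q_k\dist_k)}\,\sqrt{q_k}\Big)^2 \le \Big(\sum_{k=1}^n \frac{w_k}{q_k\dist_k}\Big)\Big(\sum_{k=1}^n q_k\Big),
\]
with equality iff $q_k \propto \sqrt{w_k/\dist_k}$, yielding $\q^\dag$ and the claimed value of $\min_\q \subscr{\delta}{upper}(\q)$ after multiplying by $\etab(\Tbar_{\max}+d_{\max})$. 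Statement (ii) follows from the same Cauchy--Schwarz bound, combined with the crude lower bound
$\sum_i q_i \Tbar_i + \sum_{i,j} q_i q_j d_{ij} \ge \Tbar_{\min}\sum_i q_i = \Tbar_{\min}$ on the aggregation-time factor in $\gav(\q)$.

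For statement (iii), the first ratio is immediate: $\gav(\q^\dag) \le \subscr{\delta}{upper}(\q^\dag)$ by definition of $\subscr{\delta}{upper}$, and $\gav(\q^*) \ge \gav^{1\textup{-min}}$ is bounded below by the lower bound in (ii), which coincides with $\subscr{\delta}{upper}(\q^\dag)$ up to the replacement of $\Tbar_{\max}+d_{\max}$ by $\Tbar_{\min}$. For the second ratio, I would apply Lemma~\ref{lem:global-lower-bound} with $m=1$ (so $\Tsmall_k = \Tbar_k$) to get $\gav^{1\textup{-min}} \ge \etab \Tbar_{\min}/\dist_{\max}$, and then bound $\big(\sum_k\sqrt{w_k/\dist_k}\big)^2 \le n\sum_k w_k/\dist_k \le n/\dist_{\min}$ by a second application of Cauchy--Schwarz together with $\sum_k w_k = 1$; combining these gives the factor $n(\Tbar_{\max}+d_{\max})\dist_{\max}/(\Tbar_{\min}\dist_{\min})$.

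For statement (iv), I would start from the per-region expression in Theorem~\ref{thm:ensemble-cusum}, bound the aggregation factor by $\Tbar_{\max}+d_{\max}$, and substitute $q^\dag_k = \sqrt{w_k/\dist_k}/\sum_j \sqrt{w_j/\dist_j}$ to obtain
\[
\expt[\delta_k(\q^\dag)] \le \frac{\etab(\Tbar_{\max}+d_{\max})}{\dist_k}\sqrt{\dist_k/w_k}\sum_{j=1}^n\sqrt{w_j/\dist_j}.
\]
Dividing by the individual lower bound $\delta_k^{1\textup{-min}} \ge \etab\Tbar_k/\dist_k$ from Lemma~\ref{lem:global-lower-bound} and using $\sum_j\sqrt{w_j/\dist_j} \le \sqrt{n/\dist_{\min}}$ (once more by Cauchy--Schwarz) yields the stated bound. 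The only step with any subtlety is recognizing the right way to apply Cauchy--Schwarz to extract the factor $(\sum_k\sqrt{w_k/\dist_k})^2$ in the first place; everything else is a direct chain of substitutions.
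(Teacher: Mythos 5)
Your proposal is correct and follows essentially the same route as the paper's proof: bound $\gav$ between $\subscr{\delta}{upper}$ and a $\Tbar_{\min}$-based lower bound, both minimized at $\q^\dag$, and invoke Lemma~\ref{lem:global-lower-bound} with $m=1$ (so $\bar{T}^{1\textup{-smlst}}_k=\Tbar_k$) for the comparisons with $\gav^{1\textup{-min}}$ and $\delta_k^{1\textup{-min}}$; the only difference is that you carry out the simplex minimization by Cauchy--Schwarz where the paper uses Lagrangian stationarity, an equally valid (and arguably more self-contained) way to identify $\q^\dag$ and the optimal value. One small caution in statement (iii): the lower bound of statement (ii) applies directly to $\gav(\q^*)$ because $\q^*\in\Delta_{n-1}$ is stationary, whereas it does not by itself bound $\gav^{1\textup{-min}}$ (an infimum over all routing policies, not only stationary ones), so the passing detour through $\gav^{1\textup{-min}}$ in your first ratio should simply be dropped.
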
 
\begin{proof}
  We start by establishing the first statement.  It follows from 
  the stationarity conditions on the Lagrangian that the
  minimizer $\q^\dag$ of $\subscr{\delta}{upper}$ satisfy $q_k^\dag \propto
  \sqrt{w_k \etab/\dist_k}$, for each $k\in\until{n}$. 
Incorporating this fact into $\sum_{k=1}^n q^\dag_k=1$ yields the expression for $q^\dag_k$.
  The expression for
  $\subscr{\delta}{upper}(\q^\dag)$ can be verified by substituting the 
  expression for $\q^\dag$ into $\subscr{\delta}{upper}$.

  To prove the second statement, we construct a lower bound
  $\map{\subscr{\delta}{lower}}{\Delta_{n-1}}{\real_{>0}\union
    \{+\infty\}}$ to the average detection delay $\gav$ defined by
  $\subscr{\delta}{lower}(\q) = \sum_{k=1}^n w_k \etab \bar T_{\min}/\dist_k q_{k}$.  
  It can be verified that $\subscr{\delta}{lower}$ also achieves its minimum at $\q^\dag$, and
\begin{align*}
  \subscr{\delta}{lower}(\q^\dag) &= \Big(\sum_{k=1}^n \sqrt{\frac{w_k}{\dist_k}}\Big)^2 \etab\; \Tbar_{\min}.
\end{align*}
We note that
\[
\subscr{\delta}{lower}(\q^\dag) \le \subscr{\delta}{lower}(\q^*) \le \gav(\q^*)
\le \gav(\q), \forall \q\in \Delta_{n-1}.
\]
Thus, the second statement follows.

To prove the first part of the third statement, we note that
\[
\subscr{\delta}{lower}(\q^\dag) \le \gav(\q^*) \le \gav(\q^\dag) \le
\subscr{\delta}{upper}(\q^\dag).
\]
Therefore, the policy $\q^\dag$ is within $\subscr{\delta}{upper}(\q^\dag)
/ \subscr{\delta}{lower}(\q^\dag) =(T_{\max}+d_{\max})/T_{\min}$ factor of
optimal stationary policy.

To prove the second part of the third statement, we note
\begin{align*}
\frac{\gav(\q^\dag)}{\gav^{1\textup{-min}}} &\le  \frac{\dist_{\max}(\Tbar_{\max}+d_{\max})}{\dist_{\min}\Tbar_{\min}}(\sqrt{w_1}+\ldots +\sqrt{w_n})^2\\
& \le   n \frac{\Tbar_{\max}+d_{\max}}{\Tbar_{\min}}\frac{\dist_{\max}}{\dist_{\min}},
\end{align*}
where the last inequality follows from the fact: $\max\setdef{\sqrt{w_1}+\ldots +\sqrt{w_n}}{w_1+\ldots+ w_n=1}=\sqrt{n}$.

To establish the last statement, we note that 
\begin{align*}
\frac{\expt[\delta_k(\q^{\dag})]}{\delta^{1\textup{-min}}_k} &\le \frac{ (\Tbar_{\max}+ d_{\max})}{q_k^\dag \Tbar_k}\\
&\le \frac{(\Tbar_{\max}+d_{\max})}{\Tbar_k} \sqrt{\frac{\dist_k} {w_k \dist_{\min}}}(\sqrt{w_1}+\ldots +\sqrt{w_n}) \\
&\le \frac{(\Tbar_{\max}+d_{\max})}{\Tbar_k} \sqrt{\frac{n \dist_k} {w_k \dist_{\min}}}.
\end{align*}
This concludes the proof of the theorem.
\end{proof}

In the following, we would refer to $\q^\dag$ as the \emph {single vehicle efficient stationary policy}.

\begin{remark}[\bit{Efficient stationary policy}]
  As opposed to the average detection delay $\gav$, the upper bound
  $\subscr{\delta}{upper}$ does not depend upon any travel time
  information. Then, our efficient policy does not take this information
  into account, and it may not be optimal. Instead, an optimal policy
  allocates higher visiting probabilities to regions that are located more
  centrally in the environment.  We resort to the efficient policy because
  (i) if the problem~\eqref{eq:original-objective-function} does not admit
  a unique minimum, then the optimal policy can not be computed
  efficiently; and (ii) the efficient policy has an intuitive, tractable,
  and closed form expression.  \oprocend
\end{remark}

\subsection{Analysis for multiple vehicles}
We now consider the randomized ensemble CUSUM with $m>1$ vehicles.  In this
setting the vehicles operate in an asynchronous fashion. This
asynchronicity, which did not occur in the single vehicle case, is due to
(i) different travel times between two different pair of regions, and (ii)
different realized value of processing time at each iteration.  Such an
asynchronous operation makes the time durations between two subsequent
iterations non-identically distributed and makes it difficult to obtain
closed form expressions for the expected detection delay at each region.

Motivated by the above discussion, we determine a lower bound on the
expected detection delay for the randomized ensemble CUSUM algorithm
with multiple vehicles. 
Let $\q^r=(q^r_1,\ldots,q^r_n) \in \Delta_{n-1}$ denote the
\emph{stationary policy} for vehicle $\mc U_r$, i.e., the vector of
probabilities of selecting different regions for vehicle $\mc U_r$,
and let $ \vec{\q}_m=(\q^1, \ldots, \q^m) \in \Delta_{n-1}^m$.  We
note that for the randomized ensemble CUSUM algorithm with $m$
vehicles the space of vehicle routing policies is $\Omega =
\Delta_{n-1}^m$. We construct a lower bound on the processing times at
different regions for different vehicles in the following way. Let
$\Xi$ be the set of all the sets with cardinality $m$ in which each entry 
is an arbitrarily chosen region; equivalently, $\Xi =\{\mc R_1,\ldots, \mc R_n\}^m$.
Let a realization of the
processing times at the regions in a set $\xi \in \Xi$ be $t_1^{\xi},
\ldots, t_m^{\xi}$.  We now define a lower bound $\Tone$ to the
expected value of the minimum of the processing times at $m$ arbitrary
regions as $\Tone = \min \setdef{\expt[\min\{t_1^{\xi}, \ldots,
  t_m^{\xi}\}]}{\xi\in \Xi}$.
%

\begin{theorem}[\textit{\textbf{Multi-vehicle randomized ensemble
      CUSUM}}]\label{thm:stationary-lower-bound}
  For the randomized ensemble CUSUM algorithm with $m$ vehicles and
  stationary region selection policies $\q^r$, $r\in\until{m}$, the
  detection delay $\delta^k$ at region $\mc R_k$ satisfies:
\begin{align*}
  \expt_{f^1_k}[\delta_{k}(\vec{\q}_m)] \ge \frac{\etab\;
    \Tone}{\sum_{r=1}^m q_k^r \dist_k}.
\end{align*}
\end{theorem}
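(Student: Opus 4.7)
The plan is to mimic the argument from the proof of Lemma \ref{lem:global-lower-bound} by constructing a synchronized, modified stochastic process whose detection delay at region $\mc R_k$ provably underestimates that of the original asynchronous multi-vehicle process, and for which the expected detection delay admits the desired closed-form lower bound.

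First, I would introduce the modified process in which all $m$ vehicles operate in lockstep, each round $b$ proceeding as follows: every vehicle $r$ independently samples a region $R_b^r$ according to $\q^r$, collects an observation with realized processing time $T^r(R_b^r)$, and the round concludes after time $\tilde T_b = \min_{r\in \until{m}} T^r(R_b^r)$; travel times are ignored. By coupling through the same random region choices and processing times, the time for any vehicle $r$ to complete $B$ observations in the original process is at least $\sum_{b=1}^B T^r(R_b^r) \ge \sum_{b=1}^B \tilde T_b$, which is the elapsed time of $B$ rounds in the modified process. Consequently, at any common physical time the modified process has collected no fewer observations from region $\mc R_k$ than the original, the modified CUSUM statistic at $\mc R_k$ dominates the original, and therefore the modified detection delay at $\mc R_k$ is a lower bound for the original detection delay.

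Second, I would analyze the expected number of rounds $B^*$ to detection in the modified process. In each round, the increment to the CUSUM statistic at $\mc R_k$ is $\sum_{r=1}^m \mathbf{1}[R_b^r = k] \log(f^1_k(y_b^r)/f^0_k(y_b^r))$; these round-level increments are i.i.d.\ with mean $\sum_{r=1}^m q_k^r \dist_k$ under $f^1_k$. Applying the Wald approximation as in Section \ref{subsec:cusum} to this aggregated random walk yields $\expt_{f^1_k}[B^*] = \etab / \sum_{r=1}^m q_k^r \dist_k$. Next, by the very definition $\Tone = \min_{\xi \in \Xi} \expt[\min\{t_1^\xi,\ldots,t_m^\xi\}]$, conditioning on the realized region assignment at round $b$ gives $\expt[\tilde T_b \mid \{R_b^r\}_{r=1}^m] \ge \Tone$, and the tower property then yields $\expt[\tilde T_b] \ge \Tone$. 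A final application of Wald's identity to the stopping time $B^*$ with the per-round durations $\tilde T_b$ produces $\expt_{f^1_k}[\tau_k^{\textup{mod}}] \ge \Tone\, \expt[B^*] = \etab \Tone / \sum_{r=1}^m q_k^r \dist_k$, which combined with the coupling bound of the first step establishes the claim.

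The main obstacle I anticipate is making the coupling argument precise enough to guarantee that the modified CUSUM statistic at $\mc R_k$ dominates the original one at every physical time, notwithstanding the asynchronous completion of observations across vehicles in the original process. The subtlety is that the mapping from rounds to physical time differs across the two processes, and the independence needed to apply Wald's identities (both to the CUSUM random walk and to the accumulated time) must be carefully justified via the i.i.d.\ structure of the region choices and processing times across rounds.
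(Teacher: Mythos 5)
Your proposal is correct and follows essentially the same route as the paper: both replace the asynchronous multi-vehicle process by a synchronized modified process whose per-round duration is the minimum of the $m$ realized processing times (with travel times set to zero, and expected duration bounded below by $\Tone$), note that the aggregated per-round likelihood increment at region $\mc R_k$ has mean $\sum_{r=1}^m q_k^r \dist_k$, and conclude via the Wald approximation and Wald's identity as in the single-vehicle case. The only difference is that you spell out the coupling/domination and the conditioning step for $\expt[\tilde T_b]\ge \Tone$, which the paper asserts more tersely.
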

\begin{proof}
  We construct a modified stochastic process to determine a lower
  bound on the expected detection delay. For the randomized ensemble
  CUSUM algorithm with multiple vehicles, let $t_r^b$ be the the
  processing time for the vehicle $\mc U_r$ during its $b$-th visit to
  any region.  We assume that the sampling time for each vehicle at
  its $b$-th visit in the modified process is $\min\{t_1^b,\ldots,
  t_m^b\}$.  Therefore, the sampling time for the modified process is
  the same at each region. Further, it is identically distributed for
  each visit and has expected value greater than or equal to $\Tone$.  We further assume that
  the distances between the regions are zero. Such a process
  underestimates the processing and travel time required to collect
  each observation in the randomized ensemble CUSUM algorithm. Hence,
  the expected detection delay for this process provides a lower bound
  to the expected detection delay for randomized ensemble CUSUM
  algorithm.  Further, for this process the vehicles operate
  synchronously and the expected value of the likelihood ratio at
  region $k$ at each iteration is $\sum_{r=1}^m q^r_k
  \dist(f^1_k,f^0_k)$.  The remainder of the proof follows similar to
  the proof for single vehicle case in
  Theorem~\ref{thm:ensemble-cusum}.
\end{proof}

\subsection{Design for multiple vehicles}
We now design an efficient stationary policy for randomized ensemble CUSUM
algorithm with multiple vehicles. We propose an algorithm that partitions
the set of regions into $m$ subsets, allocates one vehicle to each subset,
and implements our single vehicle efficient stationary policy in each
subset. This procedure is formally defined in
Algorithm~\ref{algo:partitioning}.

\IncMargin{.3em}
\begin{algorithm}[t]
  {\footnotesize
   \SetKwInOut{Input}{Input}
   \SetKwInOut{Set}{Set}
   \SetKwInOut{Title}{Algorithm}
   \SetKwInOut{Require}{Require}
   \SetKwInOut{Output}{Output}
   \Input{vehicles $\{\mc U_1, \ldots, \mc U_m\}$, regions $\mc R =\{\mc R_1, \ldots, \mc R_n\}$, \\ \hfill a single vehicle routing policy\;}
   \Require{$n>m$ \;}
   \Output{a $m$-partition of the regions \;}

   \medskip

   \nl  partition $\mc R$ into $m$ arbitrary subsets $\seqdef{\mc S^r}{r\in\until{m}}$ \\
   \hfill with cardinalities $n_r \le \lceil n/m \rceil, r\in\until{m}$ \;

   \smallskip
   
   \nl allocate vehicle $\mc U_r$ to subset $\mc S^r$, for each $r\in \until{m}$\;
   
   \smallskip
   
   \nl implement the single vehicle efficient stationary policy in each
   subset.

  \medskip

  

    \caption{\textit{Partitioning Algorithm}}
  \label{algo:partitioning}}
\end{algorithm} 
\DecMargin{.3em}

Let the subset of regions allocated to vehicle $\mc U_r$ be $\mc S^r, r\in \until{m}$.
We will denote the elements of subset $\mc S^r$ by
$\mc S^r_i , i\in \until{n_r}$. Let $\subscr{\vec{\q}^{\dag}}{part}\in
\Delta_{n-1}^m$ be a stationary routing policy under the partitioning
algorithm that implements single vehicle efficient stationary policy in
each partition. 
We define the weights in equation~\eqref{eq:average-detection-delay-definition} by $w_k=\pi^1_k/\sum_{j=1}^n \pi^1_j$,
where $\pi^1_k$ is the prior probability of anomaly at region $\mc R_k$.
Let $w_{\min}=\min\{w_1,\ldots, w_n\}$ and $w_{\max}=\max\{w_1,\ldots, w_n\}$.  
We now analyze the performance of the partitioning algorithm and show that it is
within a  factor of optimal.

\begin{theorem}[\bit{Performance of the partitioning
    policy}] \label{thm:multi-avg-detection-delay}
  For the partitioning algorithm with $m$ vehicles and $n$ regions that
  implements the single vehicle efficient stationary policy in each
  partition, the following statements hold:
\begin{enumerate}
\item the average detection delay under partitioning policy satisfies
  the following upper bound
\[
\gav({\subscr{\vec{\q}^{\dag}}{part}}) \le m
\left\lceil\frac{n}{m}\right\rceil^2 \frac{w_{\max} \etab
  (\Tbar_{\max} +d_{\max})}{\dist_{\min}};
\]
\item the average detection delay satisfies the following lower bound
\[
\gav(\vec{\q}_m) \ge \Big( \sum_{k=1}^n
\sqrt{\frac{w_k}{\dist_k}}\Big)^2 \frac{ \etab \Tone}{m} ,
\]
for any $\vec{\q}_m\in \Delta_{n-1}^m$;
\item the stationary policy $\subscr{\vec{\q}^{\dag}}{part}$ is within
  a factor of optimal, and
\begin{align*}
  \frac{\gav(\subscr{\vec{\q}^{\dag}}{part})}{\gav(\vec{\q}_m^*)} &\le  \frac{4w_{\max}}{w_{\min}}\frac{(\Tbar_{\max} +d_{\max})}{\Tone} \frac{\dist_{\max}}{\dist_{\min}}, \text{ and }\\
  \frac{\gav(\subscr{\vec{\q}^{\dag}}{part})}{\gav^{m\textup{-min}}} & \le
  m^2 \Big\lceil \frac{n}{m} \Big \rceil \frac{(\Tbar_{\max}
    +d_{\max})}{\Tsmall_{\min}}\frac{\dist_{\max}}{\dist_{\min}},
\end{align*}
where $\vec{\q}_m^*$ is optimal stationary policy;
\item the expected detection delay at region $\mc R_k$ under the
  stationary policy $\subscr{\vec{\q}^{\dag}}{part}$ satisfies
\begin{align*}
  \frac{\expt[\delta_k(\subscr{\vec{\q}^{\dag}}{part})]}{\delta^{m\textup{-min}}_k}
  \le \frac{m (\Tbar_{\max}+d_{\max})}{\Tsmall_k } \sqrt{ \Big\lceil
    \frac{n}{m}\Big \rceil \frac{\dist_k}{ w_k \dist_{\min}}}.
\end{align*}
\end{enumerate}
\end{theorem}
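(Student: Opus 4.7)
The plan is to derive each of the four claims by combining the single-vehicle analyses of Theorems~\ref{thm:ensemble-cusum} and~\ref{thm:ave-detection-delay}, applied inside each partition $\mc S^r$, with the multi-vehicle bounds of Theorem~\ref{thm:stationary-lower-bound} and Lemma~\ref{lem:global-lower-bound}, together with appropriate Cauchy--Schwarz estimates. A preliminary observation is that, since vehicle $\mc U_r$ updates the CUSUM statistic only at regions in its partition $\mc S^r$, the expected delay $\expt[\delta_k]$ for $\mc R_k \in \mc S^r$ depends only on $\mc U_r$'s policy, and hence $\gav(\subscr{\vec{\q}^{\dag}}{part})$ decomposes as $\sum_{r=1}^m \sum_{k \in \mc S^r} w_k \expt[\delta_k]$.

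For statement (i), I would apply the upper-bound minimization of Theorem~\ref{thm:ave-detection-delay}(i) partition-by-partition; its proof uses only the proportionality $q_k^\dag \propto \sqrt{w_k/\dist_k}$ and goes through without renormalizing the weights inside $\mc S^r$. This yields $\sum_{k \in \mc S^r} w_k \expt[\delta_k] \le \bigl(\sum_{k \in \mc S^r} \sqrt{w_k/\dist_k}\bigr)^2 \etab(\Tbar_{\max}+d_{\max})$; the crude bound $\bigl(\sum_{k \in \mc S^r}\sqrt{w_k/\dist_k}\bigr)^2 \le n_r^2\, w_{\max}/\dist_{\min}$ combined with $n_r \le \lceil n/m\rceil$ and summed over $r\in\until{m}$ gives the claim. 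For statement (ii), I start from Theorem~\ref{thm:stationary-lower-bound}, which gives $\expt[\delta_k]\ge \etab\,\Tone/\bigl(\sum_r q_k^r \dist_k\bigr)$. Setting $p_k=\sum_r q_k^r$ so that $\sum_k p_k = m$, the Cauchy--Schwarz inequality $\bigl(\sum_k \sqrt{w_k/\dist_k}\bigr)^2 \le m\sum_k w_k/(p_k\dist_k)$ produces the desired lower bound.

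For statement (iii), the first ratio follows by dividing (i) by (ii), using the minorization $\bigl(\sum_k\sqrt{w_k/\dist_k}\bigr)^2 \ge n^2 w_{\min}/\dist_{\max}$, and absorbing $m^2\lceil n/m\rceil^2/n^2 \le 4$ via the elementary inequality $m\lceil n/m\rceil \le n+m \le 2n$; the second ratio follows by dividing (i) by Lemma~\ref{lem:global-lower-bound}'s bound $\gav^{m\textup{-min}}\ge\etab\Tsmall_{\min}/(m\dist_{\max})$ and using $w_{\max}\le 1$. For statement (iv), I would apply Theorem~\ref{thm:ensemble-cusum}(ii) to vehicle $\mc U_r$ owning $\mc R_k$, bound the aggregation-time factor by $\Tbar_{\max}+d_{\max}$, substitute $q_k^\dag = \sqrt{w_k/\dist_k}/\sum_{j \in \mc S^r}\sqrt{w_j/\dist_j}$, and estimate the normalizing sum by $\sum_{j \in \mc S^r}\sqrt{w_j/\dist_j}\le \sqrt{n_r \sum_{j \in \mc S^r} w_j/\dist_j}\le \sqrt{\lceil n/m\rceil/\dist_{\min}}$ via Cauchy--Schwarz; dividing by $\delta_k^{m\textup{-min}}\ge\etab\Tsmall_k/(m\dist_k)$ yields the stated ratio.

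The main technical obstacle is handling the weights $w_k$ correctly when applying Theorem~\ref{thm:ave-detection-delay}(i) to a single partition, since the restricted weights do not sum to one inside $\mc S^r$; this is resolved by noting that the proof of Theorem~\ref{thm:ave-detection-delay}(i) is homogeneous in the weights and applies verbatim to the unnormalized $w_k$. A secondary issue is choosing the Cauchy--Schwarz estimates carefully enough to recover the exact stated constants, in particular the factor of $4$ in the first ratio of statement (iii).
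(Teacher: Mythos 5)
Your overall route is the same as the paper's: bound each partition's contribution via the single-vehicle efficient policy of Theorem~\ref{thm:ave-detection-delay}, obtain the global lower bound from Theorem~\ref{thm:stationary-lower-bound} by minimizing $\sum_k w_k\etab\Tone/(p_k\dist_k)$ subject to $p_k=\sum_r q_k^r$, $\sum_k p_k=m$ (your Cauchy--Schwarz step is exactly the verification the paper leaves implicit), and divide upper by lower bounds; your treatment of (i), (ii), the first ratio in (iii) (the factor $4$ via $m\lceil n/m\rceil\le n+m\le 2n$ matches the paper's $\lceil n/m\rceil/(n/m)\le 2$), and (iv) are all sound, including the observation that the single-partition optimization is homogeneous in the unnormalized weights.

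However, your derivation of the second ratio in statement (iii) has a genuine gap. Dividing the crude bound of statement (i), namely $\gav(\subscr{\vec{\q}^{\dag}}{part}) \le m\lceil n/m\rceil^2 w_{\max}\etab(\Tbar_{\max}+d_{\max})/\dist_{\min}$, by Lemma~\ref{lem:global-lower-bound}'s bound $\gav^{m\textup{-min}}\ge \etab\,\Tsmall_{\min}/(m\dist_{\max})$ and using $w_{\max}\le 1$ yields $m^2\lceil n/m\rceil^2 w_{\max}\,(\Tbar_{\max}+d_{\max})\dist_{\max}/(\Tsmall_{\min}\dist_{\min})$, and passing from this to the claimed $m^2\lceil n/m\rceil\,(\Tbar_{\max}+d_{\max})\dist_{\max}/(\Tsmall_{\min}\dist_{\min})$ would require $\lceil n/m\rceil\, w_{\max}\le 1$, which is false in general (e.g., $w_{\max}$ near $1$ with $n>m$ forces $\lceil n/m\rceil\ge 2$). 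The fix, which is what the paper's ``follows similar to the proof of Theorem~\ref{thm:ave-detection-delay}'' points to, is to avoid the per-partition worst-case bound here: keep $\gav(\subscr{\vec{\q}^{\dag}}{part}) \le \sum_{r=1}^m \big(\sum_{i\in\mc S^r}\sqrt{w_i/\dist_i}\big)^2\etab(\Tbar_{\max}+d_{\max})$, then use $\big(\sum_{i\in\mc S^r}\sqrt{w_i/\dist_i}\big)^2 \le \dist_{\min}^{-1} n_r \sum_{i\in\mc S^r} w_i$ and $\sum_{k=1}^n w_k=1$ to get the numerator bound $\lceil n/m\rceil\,\etab(\Tbar_{\max}+d_{\max})/\dist_{\min}$; dividing by Lemma~\ref{lem:global-lower-bound}'s bound then gives a ratio of at most $m\lceil n/m\rceil(\Tbar_{\max}+d_{\max})\dist_{\max}/(\Tsmall_{\min}\dist_{\min})$, which implies the stated inequality since $m\ge 1$.
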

\begin{proof}
  We start by establishing the first statement. We note that under the
  partitioning policy, the maximum number of regions a vehicle serves is
  $\lceil n/m \rceil$. It follows from
  Theorem~\ref{thm:ave-detection-delay} that for vehicle $\mc U_r$ and
  the associated partition $\mathcal{S}^r$, the average detection
  delay is upper bounded by
  \begin{align*}
    \gav(\subscr{\q^r}{part}) &\le \Big(\sum_{i=1}^{n_r}\sqrt{\frac{w_i}{\dist_i}}\Big)^2 \etab  (\Tbar_{\max} +d_{\max})\\
    &\le \Big\lceil \frac{n}{m}\Big \rceil ^2 \frac{\etab
      w_{\max}(\Tbar_{\max}+d_{\max})}{\dist_{\min}}.
  \end{align*}
  Therefore, the overall average detection delay satisfies
  $\gav({\subscr{\vec{\q}^{\dag}}{part}}) \le m
  \gav(\subscr{\q^r}{part}) $.  This establishes the first statement.

  To prove the second statement, we utilize the lower bounds obtained
  in Theorem~\ref{thm:stationary-lower-bound} and construct a lower
  bound to the average detection delay
  $\map{\subscr{\delta}{lower}^m}{\Delta_{n-1}^m}{\real_{>0}\union
    \{+\infty\}}$ defined by $\subscr{\delta}{lower}^m(\vec{\q}_m)=
  \sum_{k=1}^n ({v_k \Tone}/{\sum_{r=1}^m q_k^r }).$ It can be
  verified that
 \[
 \min_{\vec{\q}_m\in \Delta_{n-1}^m} \subscr{\delta}{lower}^m
 (\vec{\q}_m)= \Big( \sum_{k=1}^n \sqrt{\frac{w_k}{\dist_k}}\Big)^2
 \frac{ \etab \Tone}{m}.
 \]

  We now establish the first part of the third statement. Note that
\begin{align*}
 \frac{\gav(\subscr{\vec{\q}^{\dag}}{part})}{\gav(\vec{\q}_m^*)} 
 & \le   \frac{\left\lceil n/m \right\rceil^2 } {(n/m)^2}\frac{w_{\max}}{w_{\min}}\frac{(\Tbar_{\max} +d_{\max})}{\Tone} \frac{\dist_{\max}}{\dist_{\min}}  \\ 
 &\le  \frac{4w_{\max}}{w_{\min}}\frac{(\Tbar_{\max} +d_{\max})}{\Tone} \frac{\dist_{\max}}{\dist_{\min}},
\end{align*}  
where the last inequality follows from the fact that $(\left\lceil n/m
\right\rceil) /(n/m) \le 2$.

The remainder of the proof follows similar to the proof of  Theorem~\ref{thm:ave-detection-delay}.
\end{proof}

\section{Adaptive ensemble CUSUM Algorithm}\label{sec:adaptive-policy}
The stationary vehicle routing policy does not utilize the real-time
information regarding the likelihood of anomalies at the regions. We
now develop an adaptive policy that incorporates the anomaly
likelihood information provided by the anomaly detection algorithm.
We consider the CUSUM statistic at a region as a measure of the
likelihood of an anomaly at that region, and utilize it at each
iteration to design new prior probability of an anomaly for each
region.  At each iteration, we adapt the efficient stationary policy
using this new prior probability.  This procedure results in higher
probability of visiting an anomalous region and, consequently, it
improves the performance of our efficient stationary policy. In
Section~\ref{sec:numerical} we provide numerical evidence showing that
the adaptive ensemble CUSUM algorithm improves the performance of
randomized ensemble CUSUM algorithm.

\IncMargin{.3em}
\begin{algorithm}[t]
  {\footnotesize
   \SetKwInOut{Input}{Input}
   \SetKwInOut{Set}{Set}
   \SetKwInOut{Title}{Algorithm}
   \SetKwInOut{Require}{Require}
   \SetKwInOut{Output}{Output}

   \Input{parameters $\eta$, $\dist_k$, pdfs $f^0_k, f^1_k$, for each $k\in \until{n}$  \;}
   \smallskip
   \Output{decision on anomaly at each region \;}

   \medskip
   
       \nl set $\Lambda^j_0=0$, for all $j\in \until{n}$, and $\tau=1$\;
               \smallskip

   \While{\textup{true}}{ 

		\nl set new prior $\pi^1_k= e^{\Lambda^k_\tau}/(1+ e^{\Lambda^k_\tau} ),$  for each $k\in \until{n}$

\smallskip 

        \nl set
        $q_k= \frac{\sqrt{\pi^1_k /\dist_k}}{\sum_{j=1}^n \sqrt{\pi^1_j/\dist_j}}$,
        for each $k\in \until{n}$\;
        
        \smallskip

        \nl  sample a region from probability distribution $(q_1,\ldots,q_n)$\;
        
        \smallskip


        \nl collect sample $y_\tau$ from region $k$\;
        
        \smallskip 

        \nl update the CUSUM statistic at each region 

        \[
        \Lambda_{\tau}^{j}=
        \begin{cases}
          \Big ( \Lambda_{\tau-1}^{k} +\log\frac{f^1_{k}(y_{\tau})}{f^0_{k}(y_{\tau})}\Big)^+, & \text{if } j=k;\\
          \Lambda_{\tau-1}^{j}, & \text{if }
          j\in\until{n}\setminus\{k\};
        \end{cases}
        \]
        

\smallskip
 
        \If{$\Lambda_{\tau}^{k} >\eta$}{
 \smallskip
 
          \nl anomaly detected at region
          $\mc R_k$\; 
\smallskip 

          \nl set $\Lambda_{\tau}^k =0$\;

        }
        
   		\nl set $\tau=\tau+1$ \;

\smallskip
   

}


    \caption{\textit{Single Vehicle Adaptive Ensemble CUSUM}}
  \label{algo:receding-horizon-vehicle-routing}}
\end{algorithm} 
\DecMargin{.3em}

%
%
%
%
Our adaptive ensemble CUSUM algorithm is formally presented in
Algorithm~\ref{algo:receding-horizon-vehicle-routing} for the single
vehicle case. For the case of multiple vehicles we resort to the
partitioning Algorithm~\ref{algo:partitioning} that implements the
single vehicle adaptive ensemble CUSUM
Algorithm~\ref{algo:receding-horizon-vehicle-routing} in each
partition. Let us denote the adaptive routing policy for a single
vehicle by ${\a}$ and the policy obtained from the partitioning
algorithm that implements single vehicle adaptive routing policy in each
partition by $\subscr{\a}{part}$.  We now analyze the performance of
the adaptive ensemble CUSUM algorithm. 
Since, the probability to visit any region varies with time in the 
adaptive ensemble CUSUM algorithm, we need to determine the
number of iterations between two consecutive visit to a region, i.e.,
the number of iterations for the recurrence of the region.   
We first derive a bound on the
expected number of samples to be drawn from a time-varying probability vector
for the recurrence of a particular state.

\begin{lemma}[\bit{Mean observations for region
    recurrence}]\label{lem:recurrence} Consider a sequence
  $\seqdef{x_\tau}{\tau\in \naturals}$, where
  $x_\tau$ is sampled from a probability vector $\boldsymbol{p}^{\tau}
  \in \Delta_{n-1}$.  If the $k$th entry of $\p^\tau$ satisfy
  $p^\tau_k \in (\alpha_k,\beta_k)$, for each $\tau\in \naturals$
  and some $\alpha_k,\beta_k \in (0,1)$, then the number of
  iterations $I_k$ for the recurrence of state $k$ satisfy $\expt[I_k]
  \le \beta_k/ \alpha_k^2$.
\end{lemma}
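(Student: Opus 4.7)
The plan is to bound $\mathbb{E}[I_k]$ via the standard tail-sum identity $\mathbb{E}[I_k] = \sum_{j=0}^\infty \mathbb{P}(I_k > j)$ for a non-negative integer-valued random variable, and then bound each tail probability using only the uniform lower bound $\alpha_k$ on $p_\tau^k$.

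First I would set up the filtration $\mathcal{F}_\tau = \sigma(x_1,\ldots,x_\tau)$ and note that, since $\boldsymbol{p}^\tau$ is allowed to depend on the past samples (as happens in Algorithm~\ref{algo:receding-horizon-vehicle-routing}, where the entries of $\boldsymbol{q}$ are functions of the current CUSUM statistics), the entry $p_\tau^k$ is $\mathcal{F}_{\tau-1}$-measurable. The hypothesis then says that $\alpha_k < p_\tau^k < \beta_k$ almost surely for every $\tau$.

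Second, I would show by induction on $j$ that $\mathbb{P}(I_k > j) \le (1-\alpha_k)^j$. Interpreting recurrence as ``no visit to state $k$ in the next $j$ iterations after the starting instant,'' the event $\{I_k > j\}$ is the intersection $\{x_1 \ne k,\ldots,x_j \ne k\}$ (with time re-indexed from the initial visit if one conditions on it). Conditioning on $\mathcal{F}_{j-1}$ and using the tower property,
\[
\mathbb{P}(I_k > j) = \mathbb{E}\bigl[\mathbf{1}_{\{I_k > j-1\}}(1-p_j^k)\bigr] \le (1-\alpha_k)\,\mathbb{P}(I_k > j-1),
\]
and the induction closes.

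Finally, summing the geometric series gives $\mathbb{E}[I_k] \le \sum_{j=0}^\infty (1-\alpha_k)^j = 1/\alpha_k$. Because the hypothesis requires $\alpha_k < \beta_k$, we have $\beta_k/\alpha_k \ge 1$, hence $\mathbb{E}[I_k] \le 1/\alpha_k \le \beta_k/\alpha_k^2$, which is the claimed bound. There is no real obstacle here; the only subtle point is the adaptive nature of $\boldsymbol{p}^\tau$, which is handled cleanly by conditioning on $\mathcal{F}_{j-1}$ before invoking the pathwise bound on $p_j^k$. Notice that the argument actually produces the tighter estimate $1/\alpha_k$; presumably the authors state the weaker form $\beta_k/\alpha_k^2$ because it matches more naturally with the downstream use in the adaptive ensemble CUSUM analysis, where both $\alpha_k$ and $\beta_k$ arise from explicit bounds on the CUSUM-induced sampling probabilities.
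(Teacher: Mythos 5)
Your proof is correct, but it takes a genuinely different route from the paper's. The paper writes out the expectation of the first-occurrence time explicitly as $\expt[I_k]=\sum_{i\in\naturals} i\, p_k^i \prod_{j=1}^{i-1}(1-p_k^j)$, bounds the success probability $p_k^i$ above by $\beta_k$ and each failure factor $1-p_k^j$ by $1-\alpha_k$, and sums $\sum_{i\in\naturals} i(1-\alpha_k)^{i-1}=1/\alpha_k^2$ to land exactly on $\beta_k/\alpha_k^2$. You instead use the tail-sum identity $\expt[I_k]=\sum_{j\ge 0}\prob(I_k>j)$ together with $\prob(I_k>j)\le(1-\alpha_k)^j$, which needs only the lower bound $\alpha_k$, yields the strictly tighter estimate $1/\alpha_k$, and is then weakened via $\alpha_k<\beta_k$ to the stated $\beta_k/\alpha_k^2$. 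Two further remarks. First, your filtration argument covers the case where $\p^\tau$ depends on the past samples, which is what actually happens in the adaptive ensemble CUSUM algorithm; the paper instead asserts that the $x_\tau$ are independent with arbitrary (time-varying but non-random) marginals, so your version is more general and arguably closer to the lemma's downstream use. Second, your closing observation is accurate: $\beta_k$ plays no essential role in your derivation, and the constant $\beta_k/\alpha_k^2$ in the lemma reflects the paper's termwise bound $p_k^i\le\beta_k$ rather than any intrinsic need for an upper bound on the sampling probability.
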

\begin{proof}
  The terms of the sequence $\seqdef{x_\tau}{\tau\in \naturals}$ are
  statistically independent. Further, the probability mass function
  $p^\tau$ is arbitrary. Therefore, the bound on the expected
  iterations for the first occurrence of state $k$ is also a bound on
  the subsequent recurrence of state $k$.  The expected number of
  iterations for first occurrence of region $k$ are
\[
\expt[I_k]= \sum_{i\in \naturals} i p^i_{k} \prod_{j=1}^{i-1}
(1-p_k^j)\le \beta_k \sum_{i\in \naturals} i (1-\alpha_k)^{i-1}
=\beta_k/\alpha_k^2.
\]
This establishes the statement.
\end{proof}

We utilize this upper bound on the expected number of iterations for recurrence
of a region to derive performance metrics for the adaptive ensemble CUSUM algorithm.
We now derive an upper bound on the expected detection delay at each region for 
adaptive ensemble CUSUM algorithm. We derive these bounds for the expected evolution
of the CUSUM statistic at each region.

\begin{theorem}[\bit{Adaptive ensemble CUSUM
    algorithm}] \label{thm:adaptive-ensemble-CUSUM} Consider the
  expected evolution of the CUSUM statistic at each region.  For the
  partitioning algorithm that implements single vehicle adaptive
  ensemble CUSUM algorithm
  (Algorithm~\ref{algo:receding-horizon-vehicle-routing}) in each
  subset of the partition, the following statement holds:
\begin{multline*}
\expt[\delta_k (\subscr{\a}{part})] \le  \Big(\frac{\etab}{\dist_k} + \frac{2(\lceil n/m \rceil -1)e^{\eta/2 }\sqrt{\dist_k}(1- e^{- \etab/2})}{\sqrt{\dist_{\min}}(1- e^{-\dist_k /2})} \\
  + \frac{(\lceil n/m \rceil-1)^2e^{\eta }{\dist_k}(1- e^{- \etab})}{{\dist_{\min}}(1- e^{-\dist_k })}\Big) (\Tbar_{\max}+d_{\max}) .
\end{multline*}
\end{theorem}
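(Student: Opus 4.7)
Since the partitioning algorithm runs Algorithm~\ref{algo:receding-horizon-vehicle-routing} independently in each subset $\mc S^r$ of size $n_r\le \lceil n/m\rceil$, I reduce the analysis to a single vehicle operating in the partition containing $\mc R_k$. Throughout I work with the expected (deterministic) evolution of the CUSUM statistics, under which the adaptive policy selects $\mc R_k$ at iteration $\tau$ with probability
\[
q_k^\tau=\frac{\sqrt{\pi_k^\tau/\dist_k}}{\sum_{j\in\mc S^r}\sqrt{\pi_j^\tau/\dist_j}},\qquad \pi_k^\tau=\frac{e^{\Lambda_\tau^k}}{1+e^{\Lambda_\tau^k}},
\]
and the expected increment of $\Lambda^k$ per iteration equals $q_k^\tau\dist_k$. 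By the Wald-type argument used in Theorem~\ref{thm:ensemble-cusum}, the expected number of \emph{visits} to $\mc R_k$ needed for $\Lambda^k$ to cross $\eta$ is $L:=\etab/\dist_k$.

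\textbf{I then decompose the iteration count} as $N_k=\sum_{\ell=1}^L I_\ell$, where $I_\ell$ is the number of iterations between the $(\ell-1)$-th and $\ell$-th visits to $\mc R_k$. During the $\ell$-th inter-visit interval, $\Lambda_\tau^k$ is frozen at $(\ell-1)\dist_k$ so that $\pi_k^\tau$ takes the corresponding sigmoid value, while the other $\Lambda_\tau^j$ may fluctuate within $[0,\eta]$. Using the worst-case bound $\pi_j^\tau\le e^{\Lambda_\tau^j}\le e^\eta$ together with $\dist_j\ge\dist_{\min}$, I get the lower bound
\[
q_k^\tau\ge\alpha_k(\ell):=\frac{u(\ell)}{u(\ell)+B},\quad u(\ell)=\sqrt{\pi_k\bigl((\ell-1)\dist_k\bigr)/\dist_k},\quad B=\frac{(n_r-1)e^{\eta/2}}{\sqrt{\dist_{\min}}},
\]
and the factor $e^{\eta/2}$ in $B$ is precisely the source of the corresponding factor in the theorem. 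Lemma~\ref{lem:recurrence}, applied with the trivial bound $\beta\le 1$, then gives $\expt[I_\ell]\le \beta/\alpha_k(\ell)^2\le(1+B/u(\ell))^2$.

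\textbf{The three terms of the theorem arise from expanding the square and summing in $\ell$.} Writing $(1+B/u(\ell))^2=1+2B/u(\ell)+(B/u(\ell))^2$, the constant terms sum to $L=\etab/\dist_k$, giving the first term. For the cross and quadratic pieces, $1/u(\ell)=\sqrt{\dist_k(1+e^{-(\ell-1)\dist_k})}$; combining this with the inequality $\sqrt{1+x}\le 1+\sqrt{x}$ and the geometric sums
\[
\sum_{\ell=1}^{L}e^{-(\ell-1)\dist_k/2}=\frac{1-e^{-\etab/2}}{1-e^{-\dist_k/2}},\qquad \sum_{\ell=1}^{L}e^{-(\ell-1)\dist_k}=\frac{1-e^{-\etab}}{1-e^{-\dist_k}},
\]
together with the factors $\sqrt{\dist_k}$ and $\dist_k$ contributed by $B/u(\ell)$ and $(B/u(\ell))^2$, respectively, produces the middle and last terms. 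Multiplying the total iteration count by the per-iteration cost $\Tbar_{\max}+d_{\max}$ (worst-case processing plus travel) then yields the stated upper bound.

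\textbf{The main obstacle} is the coupling between $\Lambda^k$ and the remaining $\Lambda^j$ through the adaptive probability vector: $q_k^\tau$ depends on the entire joint trajectory. The expected-evolution viewpoint removes the stochasticity, and the adversarial bound $\pi_j\le e^\eta$ decouples the regions at the cost of the $e^\eta$ factor, but a naive expansion of $(1+B/u(\ell))^2$ produces additional cross-products linear in $L$ that must be reorganized into the geometric-sum form. Keeping careful track of which contributions telescope into the $(1-e^{-\etab/2})$ and $(1-e^{-\etab})$ numerators instead of generating extra $\etab/\dist_k$ factors is where most of the bookkeeping lies; a subtler point is verifying that the reduction to exactly $L=\etab/\dist_k$ visits remains valid for the expected evolution, given that the adaptive $q_k^\tau$ correlates the stopping time with the increments of $\Lambda^k$.
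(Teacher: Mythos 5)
Your overall architecture is the same as the paper's: decompose the iteration count into inter-visit intervals, bound each interval through Lemma~\ref{lem:recurrence} with $\beta_k\le 1$, pin the other statistics at $\eta$ and use the expected evolution $\Lambda^k=(\ell-1)\dist_k$, take $\etab/\dist_k$ visits, multiply by the worst-case per-iteration time $\Tbar_{\max}+d_{\max}$, and replace $n$ by $\lceil n/m\rceil$ for the partitioned case. The gap is in the final assembly. Because you lower-bound $q_k^\tau$ by keeping the exact sigmoid prior for region $\mc R_k$, your quantity $u(\ell)=\sqrt{\pi_k((\ell-1)\dist_k)/\dist_k}$ gives $1/u(\ell)=\sqrt{\dist_k(1+e^{-(\ell-1)\dist_k})}$, which does \emph{not} decay in $\ell$ (it tends to $\sqrt{\dist_k}$). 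After applying $\sqrt{1+x}\le 1+\sqrt{x}$ and summing over $\ell=1,\dots,L$ with $L=\etab/\dist_k$, the cross and quadratic pieces therefore produce, besides the two geometric sums, the extra terms
\[
\frac{2(\lceil n/m\rceil-1)e^{\eta/2}\,\etab}{\sqrt{\dist_k\dist_{\min}}}
\quad\text{and}\quad
\frac{(\lceil n/m\rceil-1)^2 e^{\eta}\,\etab}{\dist_{\min}},
\]
which are of the same or larger order in $\eta$ than the retained terms and do not appear in the theorem. Your remark that these linear-in-$L$ contributions ``must be reorganized into the geometric-sum form'' is exactly the unresolved step: they cannot be absorbed, so as written you prove a strictly weaker bound than the statement.

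The repair is to bound the \emph{ratio} of priors rather than numerator and denominator separately. Since $x\mapsto e^x/(1+e^x)$ is increasing, $\Lambda^j_\tau\le\eta$ for the other regions, and $(\ell-1)\dist_k\le\etab<\eta$ before detection, you get $\pi_j^\tau/\pi_k^\tau\le e^{\Lambda^j_\tau-(\ell-1)\dist_k}\le e^{\eta-(\ell-1)\dist_k}$, hence
\[
\frac{1}{q_k^\tau}\le 1+(\lceil n/m\rceil-1)\,e^{(\eta-(\ell-1)\dist_k)/2}\sqrt{\dist_k/\dist_{\min}},
\]
which is precisely the inter-visit bound used in the paper; the perturbation now decays geometrically in $\ell$, so squaring and summing yields exactly the three terms of the statement with no leftover $\etab/\dist_k$ factors. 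With this substitution (and a short Jensen argument, as in the paper, to pass from the sum conditioned on the number of observations to its mean $\etab/\dist_k$, using concavity of $x\mapsto 1-e^{-cx}$ --- a step you assert but do not justify), your argument coincides with the paper's proof.
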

\begin{proof}
  We start by deriving expression for a single vehicle.  Let the number
  of iterations between the $(j\!-\!1)$th and $j$th visit to region
  $\mc R_k$ be $I^k_j$.

  Let the observation during the $j$th visit to region $\mc R_k$ be
  $y_j$ and the CUSUM statistic at region $\mc R_k$ after the visit be
  $C_j^k$.  It follows that the probability to visit region $\mc R_k$
  between $(j\!-\!1)$th and $j$th visit is greater than
\[
p^{j-1}_k = \frac{e^{C^k_{j-1}/2}/\sqrt{\dist_k}}{
  e^{C^k_{j-1}/2}/\sqrt{\dist_k}+(n-1)e^{\eta/2}/\sqrt{\dist_{\min}}}.
\]
Therefore, it follows from Lemma~\ref{lem:recurrence} that
\[
\expt[I^k_{j}] \le (1+(n-1) e^{(\eta-C^k_{j-1})/2} \sqrt{\dist_k/\dist_{\min}})^2.
\] 
Note that $C^k_j =\max\{0, C^k_{j-1} + \log
(f_k^1(y_j)/f_k^0(y_j))\}$. Since, maximum of two convex function is a
convex function, it follows from Jensen inequality~\cite{SIR:99}, that
\[
\expt[C^k_{j}] \ge \max\{0, \expt[C^k_{j-1}] +\dist_k\} \ge \expt[C^k_{j-1}] +\dist_k.
\] 
Therefore, $\expt[C^k_j] \ge j \dist_k$ and for expected evolution of the
CUSUM statistics 
\[
\expt[I^k_{j}] \le (1+(n-1) e^{(\eta- (j-1)\dist_k)/2} \sqrt{\dist_k/\dist_{\min}})^2.
\]  

Therefore, the total number of iterations $N_k$
required to collect $\supscr{N}{obs}_k$ observations at region $\mc
R_k$ satisfy
\begin{align*}
  \expt[N_k(\a) | \supscr{N}{obs}_k] &= \sum_{j=1}^{\supscr{N}{obs}_k}  (1+(n-1) e^{(\eta- (j-1)\dist_k)/2} \sqrt{\dist_k/\dist_{\min}})^2 \\
  &=\supscr{N}{obs}_k + \frac{2(n-1)e^{\eta/2} \sqrt{\dist_k}(1- e^{-\dist_k \supscr{N}{obs}_k/2})}{\sqrt{\dist_{\min}}(1- e^{-\dist_k /2})} \\
  &\qquad \qquad \quad +\frac{(n-1)^2e^{\eta} {\dist_k}(1- e^{-\dist_k \supscr{N}{obs}_k})}{{\dist_{\min}}(1- e^{-\dist_k})}.
\end{align*}
Note that the number of observations $\supscr{N}{obs}_k$ required at
region $\mc R_k$ satisfy $\expt[\supscr{N}{obs}_k]=\etab/\dist_k$.  It
follows from Jensen's inequality that
\begin{multline*}
\expt[N_k(\a)] \le \frac{\etab}{\dist_k} + \frac{2(n-1)e^{\eta/2 }\sqrt{\dist_k}(1- e^{- \etab/2})}{\sqrt{\dist_{\min}}(1- e^{-\dist_k /2})} \\
  + \frac{(n-1)^2e^{\eta }{\dist_k}(1- e^{- \etab})}{{\dist_{\min}}(1- e^{-\dist_k })}.
\end{multline*}
Since the expected time required to collect each evidence is smaller
$\Tbar_{\max}+d_{\max}$, it follows that
\[
\expt[\delta_k(\a)] \le (\Tbar_{\max}+d_{\max}) \expt[N_k(\a)].
\]
The expression for the partitioning policy that implement single vehicle
adaptive routing policy in each partition follow by substituting
$\lceil n/m \rceil$ in the above expressions.  This completes the
proof of the theorem.
\end{proof}
\begin{remark}[\bit{Performance bound}]
  The bound derived in Theorem~\ref{thm:adaptive-ensemble-CUSUM} is
  very conservative. Indeed, it assumes the CUSUM statistic at each
  region to be fixed at its maximum value $\eta$, except for the
  region in consideration. This is practically never the case. In
  fact, if at some iteration the CUSUM statistic is close to $\eta$,
  then it is highly likely that the vehicle visits that region at the
  next iteration, so that the updated statistic crosses the threshold
  $\eta$ and resets to zero.\oprocend
\end{remark}

\section{Numerical Results}\label{sec:numerical}
We now elucidate on the concepts developed in this paper through some
numerical examples.  We first validate the expressions for expected
detection delay obtained in
Section~\ref{sec:randomized-ensemble-cusum}.
\begin{example}[\bit{Expected detection delay}]\label{ex:single-delay}
  Consider a set of $4$ regions surveyed by a single vehicle. Let the location
  of the regions be $(10, 0), (5, 0), (0, 5)$, and $(0, 10)$,
  respectively. The vector of processing times at each region is
  $(1,2,3,4)$. Under the nominal conditions, the observations at each
  region are sampled from normal distributions $\mc N (0,1), \mc N
  (0,1.33), \mc N (0,1.67)$ and $\mc N(0,2)$, respectively, while
  under anomalous conditions, the observations are sampled from normal
  distributions with unit mean and same variance as in nominal case.
  Let the prior probability of anomaly at each region be $0.5$.  An
  anomaly appears at each region at time $50, 200, 350,$ and $500$,
  respectively. Assuming that the vehicle is holonomic and moves at unit
  speed, the expected detection delay at region $\mc R_1$ and the
  average detection delay are shown in
  Fig.~\ref{fig:detection_delay_single}. It can be seen that the
  theoretical expressions provide a lower bound to the expected
  detection delay obtained through Monte-Carlo simulations. This
  phenomenon is attributed to the Wald's approximation. \oprocend
\end{example}

\begin{figure}
    \centering \scriptsize
    \includegraphics[width=1\columnwidth]{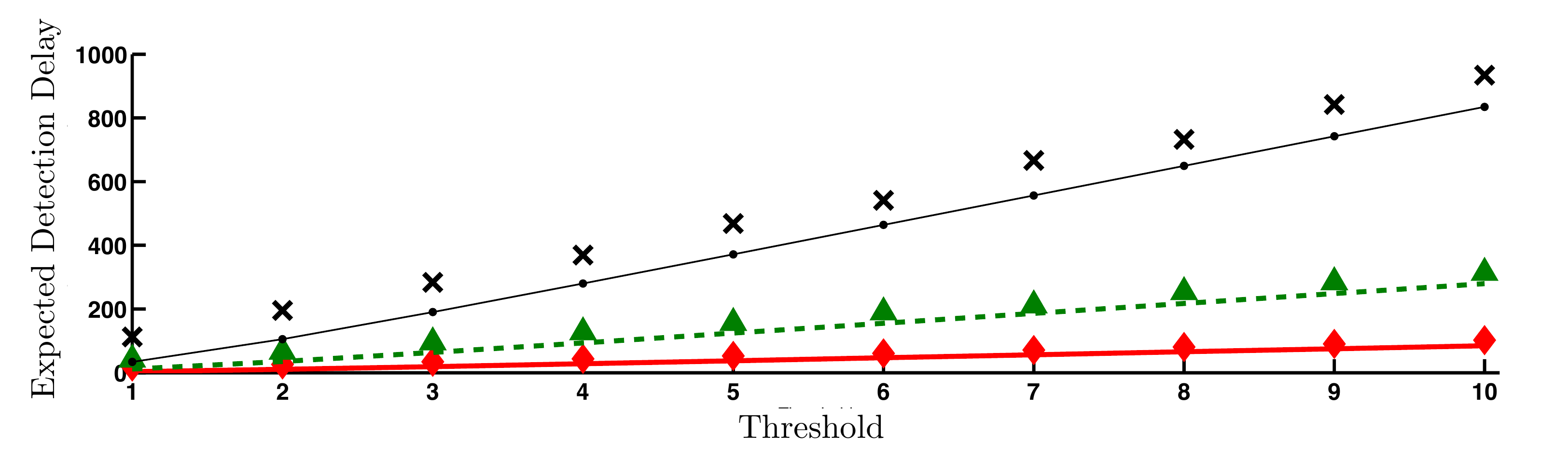}\\
    (a) Expected detection delay at region $\mc R_1$
     \includegraphics[width=1\columnwidth]{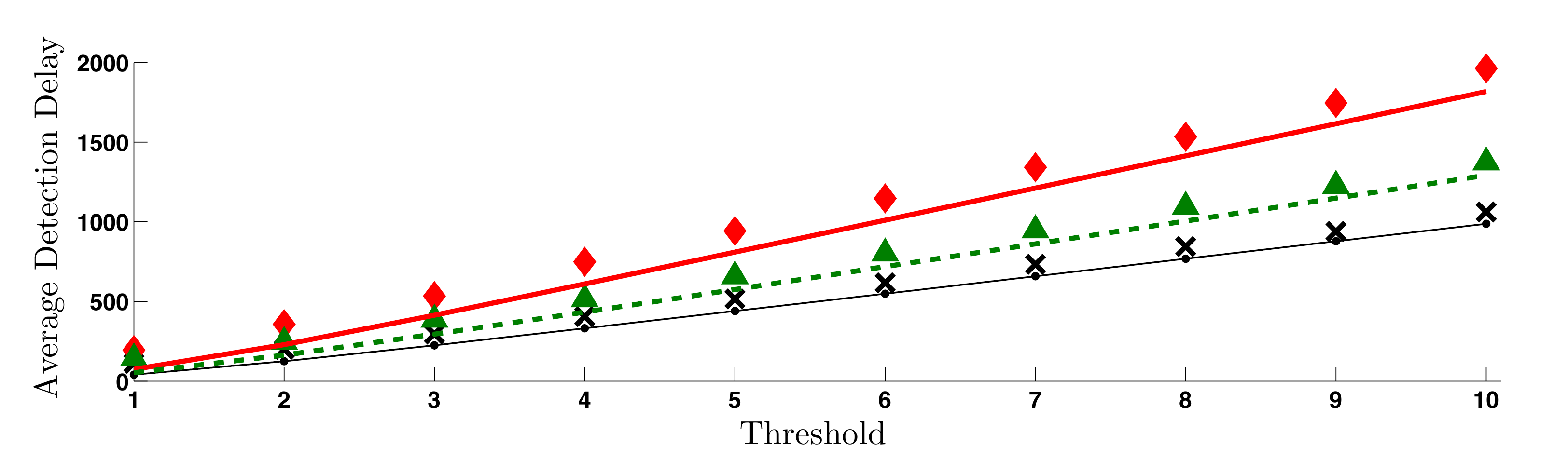}\\
     (b) Average detection delay
    \caption{Expected and average detection delay. Solid black line with dots and black $\times$, respectively, represent the theoretical expression and 
    the value obtained by Monte-Carlo simulations under stationary policy $\q=[\;0.2 \; 0.25\; 0.25 \; 0.3\;]$. Dashed green line and green triangles,
    respectively, represent the theoretical expression and 
    the value obtained by Monte-Carlo simulations under stationary policy $\q=[\;0.5\; 0.2\; 0.2\; 0.1\;]$. Solid red line and red diamonds, 
    respectively, represent the theoretical expression and 
    the value obtained by Monte-Carlo simulations under stationary policy $\q=[\;0.85\; 0.05\; 0.05\; 0.05\;]$.  }
    \label{fig:detection_delay_single}
\end{figure}

We remarked in Section~\ref{sec:setup} that if each region cannot be
reached from another region in a single hop, then a fastest mixing
Markov chain (FMMC) with the desired stationary distribution can be
constructed.  Consider a set of regions modeled by the graph $\mc
G=(V, \mc E)$, where $V$ is the set of nodes (each node corresponds to
a region) and $\mc E$ is the set of edges representing the
connectivity of the regions.  The transition matrix of the FMMC $P\in
\real^{n\times n}$ with a desired stationary distribution $\q \in
\Delta_{n-1}$ can be determined by solving the following convex
minimization problem~\cite{SB-PD-LX:04}:
\begin{equation*}
\begin{split}
\minimize & \quad \| Q^{1/2} P Q^{1/2} -\subscr{\q}{root} \subscr{\q^T}{root} \|_2\\
\subject & \quad P \boldsymbol{1} =\boldsymbol{1}\\
& \quad QP=P^TQ \\
&\quad P_{ij}\ge 0, \text{ for each } (i,j) \in \mc E \\
&\quad P_{ij}=0, \text{ for each } (i,j)\notin \mc E,
\end{split}
\end{equation*}
where $Q$ is a diagonal matrix with diagonal $\q$,
$\subscr{\q}{root}=(\sqrt{q_1},\ldots, \sqrt{q_n})$, and $
\boldsymbol{1}$ is the vector of all ones. We now demonstrate the
effectiveness of FMMC in our setup.
\begin{example}[\bit{Effectiveness of FMMC}]\label{ex:metro}
  Consider the same set of data as in
  Example~\ref{ex:single-delay}. We study the expected and average
  detection delay for randomized ensemble CUSUM algorithm when the
  regions to visit are sampled from the FMMC.  The expected and
  average detection delay for all-to-all connection topology, line
  connection topology and ring connection topology are shown in
  Fig.~\ref{fig:detection_delay_metro}.  It can be seen that the
  performance under all three topologies is remarkably close to each
  other. \oprocend
\end{example}
\begin{figure}\scriptsize
    \centering
        \includegraphics[width=1\columnwidth]{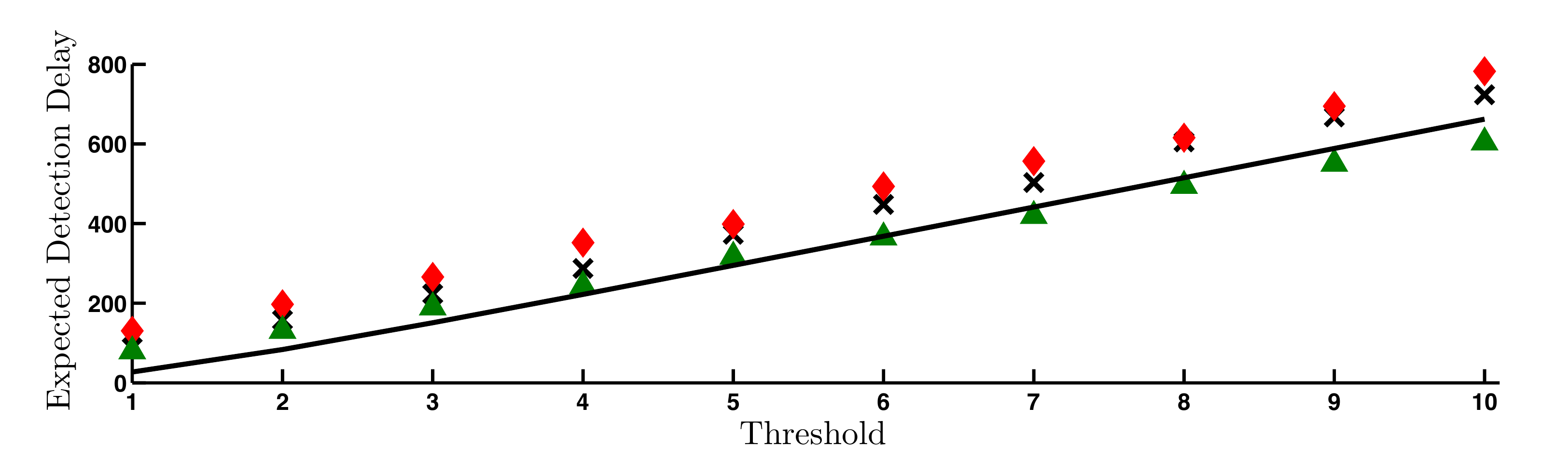}\\
    (a) Expected detection delay at region $\mc R_1$
      \includegraphics[width=1\columnwidth]{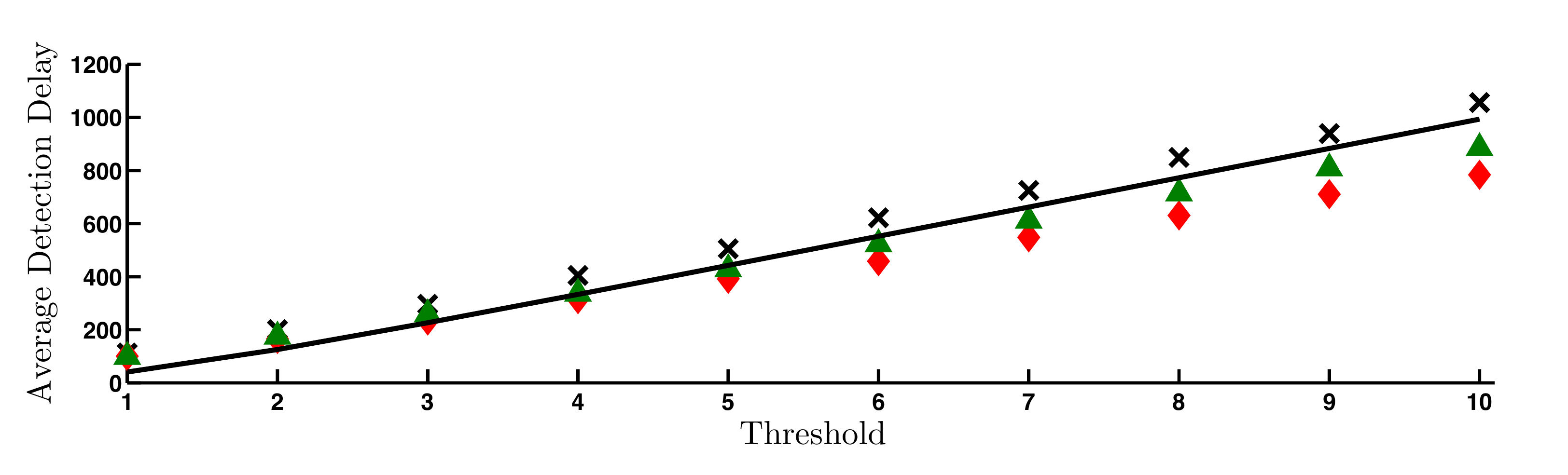}\\
    (b) Average detection delay
    \caption{Expected and average detection delay for uniform
      stationary policy. The solid black line represents the
      theoretical expression.  The black $\times$, red diamonds, and
      green triangles, respectively, represent the values obtained by
      Monte-Carlo simulations for all-to-all, line, and ring
      connection topology. For the line and ring topologies, the region
      to visit at each iteration is sampled from the fastest mixing
      Markov chain with the desired stationary distribution.}
    \label{fig:detection_delay_metro}
\end{figure}

We now study the performance of the (numerically computed) optimal and
our efficient stationary policies for the single vehicle randomized
ensemble CUSUM algorithm.
\begin{example}[\bit{Single vehicle optimal stationary policy}]
  For the same set of data as in Example~\ref{ex:single-delay}, we now
  study the performance of the uniform, the (numerically computed)
  optimal and our efficient stationary routing policies. A comparison
  is shown in Fig.~\ref{fig:single-vehicle-comparison}. Notice that
  the performance of the optimal and efficient stationary policy is
  extremely close to each other. \oprocend
\end{example}

\begin{figure}
 \includegraphics[width=1\columnwidth]{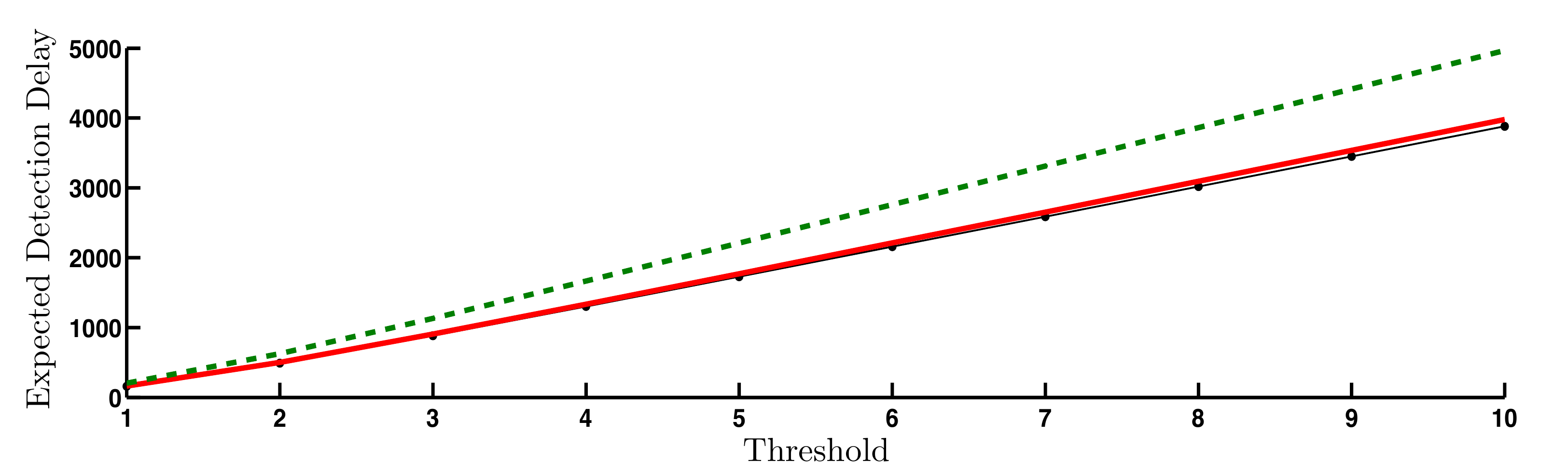}
 \caption{Average detection delay for a single vehicle. The solid red
   line, the dashed green line, and the solid black line with dots
   represent efficient, uniform, and optimal stationary policies,
   respectively.}
\label{fig:single-vehicle-comparison}
\end{figure}

We now study the performance of the optimal, partitioning and uniform
stationary policies for randomized ensemble CUSUM algorithm with
multiple vehicles.
\begin{example}[\bit{Multiple-vehicle optimal stationary policy}]
  Consider a set of $6$ regions surveyed by $3$ vehicles.  Let the
  regions be located at $(10 ,0), (5, 0), (0, 5), (0, 10), (0, 0)$ and
  $(5, 5)$.  Let the processing time at each region be unitary. Under
  nominal conditions, the observations at each region are sampled from
  normal distributions $\mc N (0,1)$, $\mc N (0,1.4)$, $\mc N
  (0,1.8)$, $\mc N (0,2.2)$, $\mc N (0,2.6)$ and $\mc N(0,3)$,
  respectively. Under anomalous conditions, the observations are
  sampled from normal distributions with unit mean and same variance
  as in the nominal case.  Let the prior probability of anomaly at
  each region be $0.5$.  An anomaly appears at each region at time
  $25, 35, 45, 55 , 65 $ and $75$, respectively.  Assuming that the
  vehicles are holonomic and moves at unitary speed, the average
  detection delay for the uniform stationary policy for each vehicle,
  the partitioning policy in which each vehicle implements single
  vehicle efficient stationary policy in each subset of the partition,
  and the partitioning policy in which each vehicle implements single
  vehicle optimal stationary policy in each subset of the partition is
  shown in Fig.~\ref{fig:multiple-vehicle-comparison}.  \oprocend
\end{example}

\begin{figure}
    \centering
    \includegraphics[width=1\columnwidth]{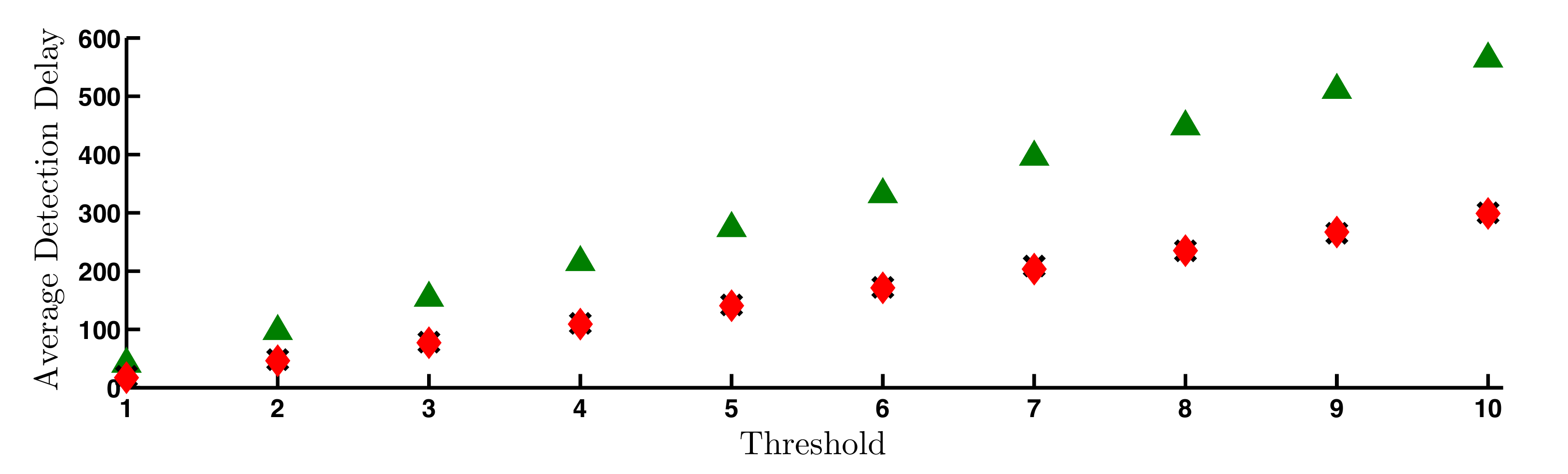}
    \caption{Average detection delay for $3$ vehicles surveying $6$
      regions. The green triangles represent the policy in which each
      vehicle surveys each region uniformly. The red diamonds and black
      $\times$ represent the partitioning policy in which each vehicle
      implements the single vehicle efficient stationary policy and the
      single vehicle optimal stationary policy, respectively.}
    \label{fig:multiple-vehicle-comparison}
\end{figure}

We now study the performance of the adaptive ensemble CUSUM algorithm,
and we numerically show that it improves the performance of our
stationary policy.
\begin{example}[\bit{Adaptive ensemble CUSUM
    algorithm}] \label{ex-adaptive-ensemble} Consider the same set of
  regions as in Example~\ref{ex:single-delay}. Let the processing time
  at each region be unitary. The observations at each region are
  sampled from normal distributions $\mc N(0,\sigma^2)$ and $\mc
  N(1,\sigma^2)$, in nominal and anomalous conditions, respectively.
  Under the nominal conditions at each region and $\sigma^2=1$, a
  sample evolution of the adaptive ensemble CUSUM algorithm is shown
  in Fig.~\ref{fig:adaptive-sample}(a).  The anomaly appears at
  regions $\mc R_2$, $\mc R_3$, and $\mc R_4$ at time $100$, $300$,
  and $500$, respectively. Under these anomalous conditions and
  $\sigma^2=1$, a sample evolution of the adaptive ensemble CUSUM
  algorithm is shown in Fig.~\ref{fig:adaptive-sample}(b).  It can be
  seen that the adaptive ensemble algorithm samples a region with high
  likelihood of anomaly with high probability, and, hence, it improves
  upon the performance of the stationary policy.

  We now study the expected detection delay under adaptive ensemble
  CUSUM algorithm and compare it with the efficient stationary policy.
  The anomaly at each region appears at time $50, 200, 350$ and $500$,
  respectively.  The expected detection delay obtained by Monte-Carlo
  simulations for $\sigma^2=1$ and different thresholds is shown in
  Fig.~\ref{fig:adaptive-performance}(a). It can be seen that the
  adaptive policy improves the detection delay significantly over the
  efficient stationary policy for large thresholds. It should be noted
  that the detection delay minimization is most needed at large
  thresholds because the detection delay is already low at small
  thresholds. Furthermore, frequent false alarms are encountered at
  low thresholds and hence, low thresholds are not typically chosen.
  The expected detection delay obtained by Monte-Carlo simulations for
  different value of $\sigma^2$ and threshold $\eta=5$ is shown in
  Fig.~\ref{fig:adaptive-performance}(b). Note that for a given value
  of $\sigma^2$, the Kullback-Leibler divergence between $\mc
  N(1,\sigma^2)$ and $\mc N(0,\sigma^2)$ is $1/2\sigma^2$.  It can be
  seen that the adaptive policy improves the performance of the
  stationary policy for each value of noise.  \oprocend
\end{example}

\begin{figure}\scriptsize
\centering
 \includegraphics[width=1\columnwidth]{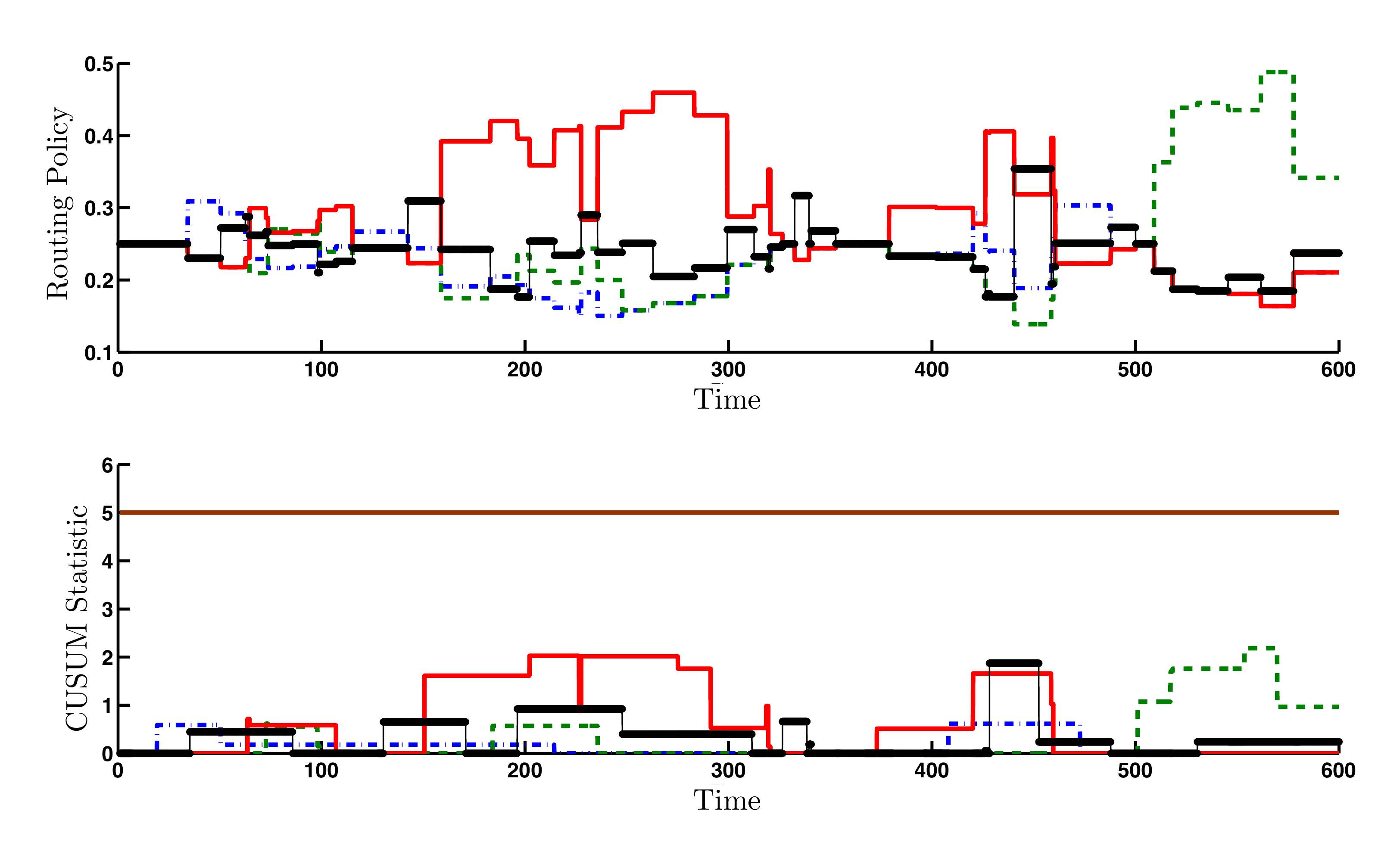}\\
(a) CUSUM statistic and vehicle routing probabilities under nominal conditions\\
 \includegraphics[width=1\columnwidth]{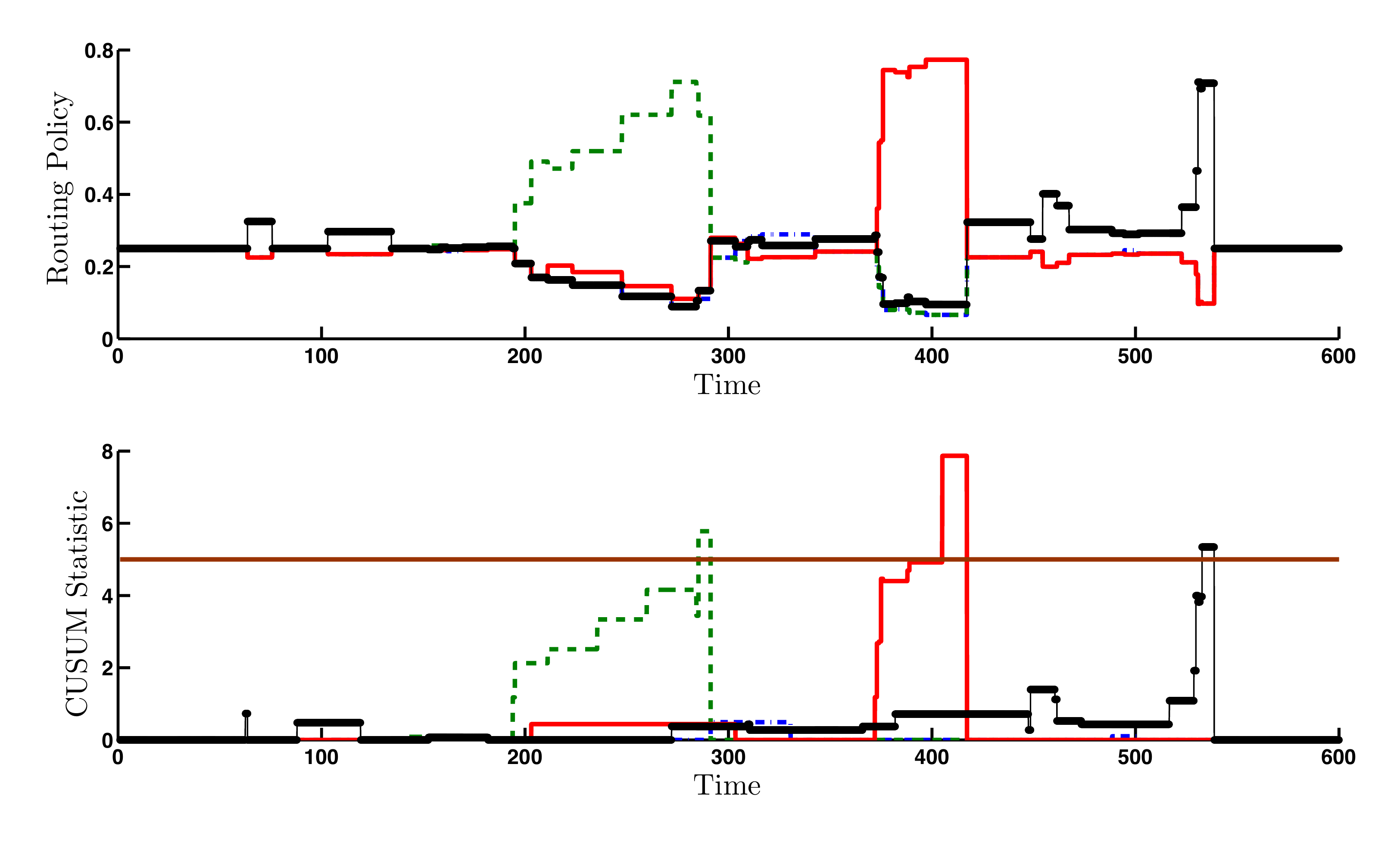}\\
(b) CUSUM statistic and vehicle routing probabilities under anomalous conditions\\
\caption{Sample evolution of the adaptive ensemble CUSUM algorithm.
The dashed-dotted blue line, dashed green line, solid red line and  solid black line with dots represent data from regions $\mc R_1, \mc R_2, \mc R_3$ and $\mc R_4$, respectively. 
The solid brown horizontal line represents the threshold. The vehicle
routing probability is a function of the likelihood of anomaly at each
region. As the likelihood of an anomaly being present at a region
increases, also the probability to survey that region increases. 
Anomalies appear at region $\mc R_2$, $\mc R_3$ and $\mc R_4$ at times $100$, $300$ and $500$, respectively. Once an anomaly is detected, it is removed and the statistic is reset to zero.
\label{fig:adaptive-sample}}
\end{figure}

\begin{figure}\scriptsize
\centering
 \includegraphics[width=1\columnwidth]{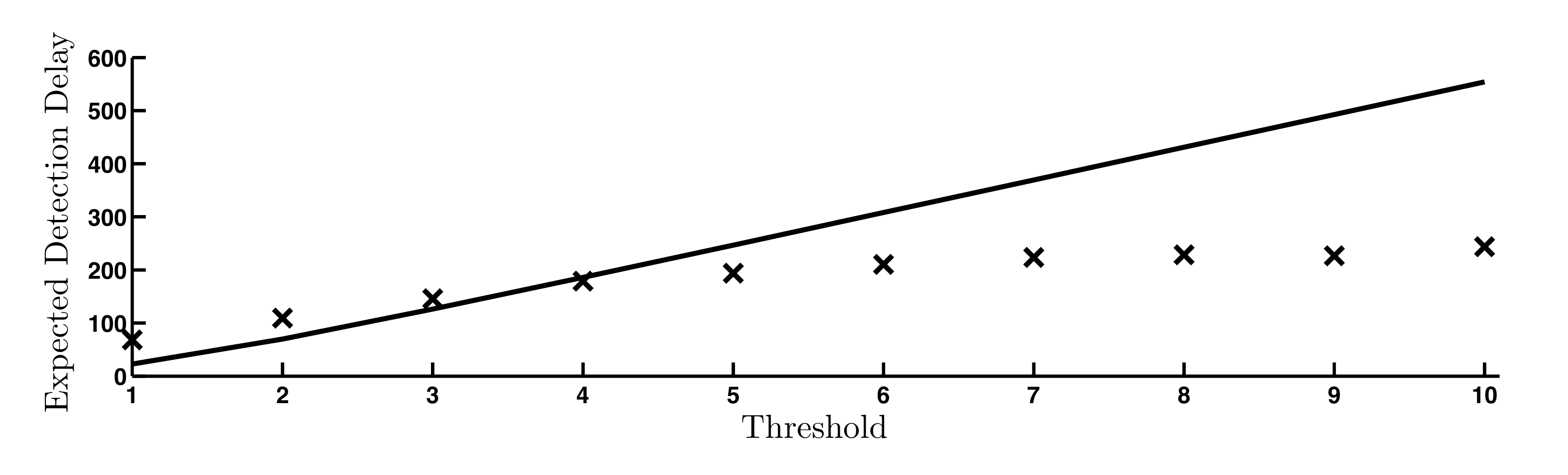}\\
(a) Expected detection delay as a function of threshold\\
 \includegraphics[width=1\columnwidth]{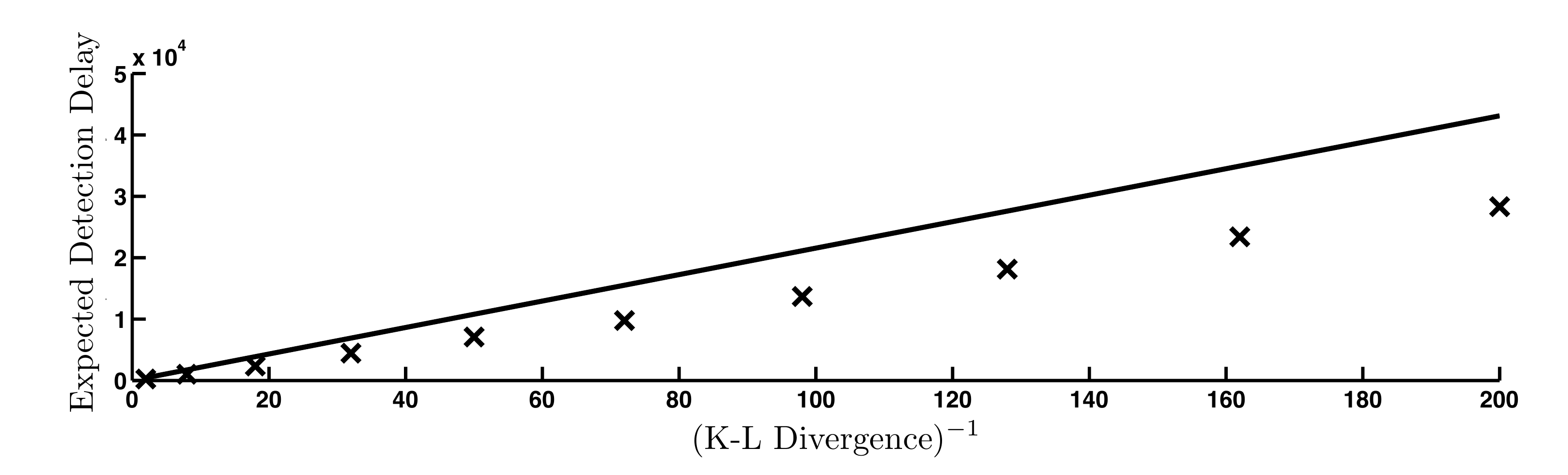}\\
(b) Expected detection delay as a function of KL divergence
\caption{Performance of the adaptive ensemble CUSUM algorithm. The solid black line represents
the theoretical expected detection delay for the efficient stationary policy and the black $\times$ represent
the expected detection delay for the adaptive ensemble CUSUM algorithm. \label{fig:adaptive-performance}}
\end{figure}

We now apply the adaptive ensemble CUSUM algorithm to a more general
scenario where the anomalous distribution is not completely known. As
remarked in Section~\ref{sec:setup}, in this case, the CUSUM algorithm
should be replaced with the GLR algorithm. Given the nominal
probability density function $f^0_k$ and the anomalous probability
density function $f^1_k(\cdot| \theta)$ parameterized by $\theta \in
\Theta \subseteq \real^\ell$, for some $\ell \in \naturals$, the GLR
algorithm~\cite{MB-IVN:93}, works identically to the CUSUM algorithm,
except that the CUSUM statistic is replaced by the statistic
\[
\Lambda^k_\tau = \max_{t\in\until{\tau}} \sup_{\theta\in \Theta}
\sum_{i=t}^\tau \log \frac{f^1_k(y_i|\theta)}{f^0_k(y_i)}.
\]
\begin{example}[\bit{Generalized Likelihood Ratio}]
  For the same set of data as in Example~\ref{ex-adaptive-ensemble},
  assume that there are three types of potential anomalies at each
  region. Since any combination of these anomalies can occur
  simultaneously, there are $7$ potential distributions under
  anomalous conditions. We characterize these distributions as
  different hypothesis and assume that the observations under each
  hypothesis $h\in \until{8}$ are sampled from a normal distribution
  with mean $\mu_h$ and covariances $\Sigma_h$. Let
\begin{align*}
\mu_1&= \left[\begin{smallmatrix} 0\\ 0\\0 \end{smallmatrix}\right], 
\mu_2= \left[\begin{smallmatrix} 1\\ 0\\0 \end{smallmatrix}\right], 
\mu_3= \left[\begin{smallmatrix} 0\\ 1\\0 \end{smallmatrix}\right], 
\mu_4= \left[\begin{smallmatrix} 0\\ 0\\1 \end{smallmatrix}\right], \\
\mu_5&= \left[\begin{smallmatrix} 1\\ 1\\0 \end{smallmatrix}\right], 
\mu_6= \left[\begin{smallmatrix} 0\\ 1\\1 \end{smallmatrix}\right], 
\mu_7= \left[\begin{smallmatrix} 1\\ 0\\1 \end{smallmatrix}\right], 
\mu_8= \left[\begin{smallmatrix} 1\\ 1\\1 \end{smallmatrix}\right], \text{ and}
\end{align*}
\begin{align*}
\Sigma_1&= \left[\begin{smallmatrix} 1 & 0 & 0\\ 0 & 1 & 0\\ 0 & 0 & 1 \end{smallmatrix}\right], 
\Sigma_2= \left[\begin{smallmatrix} 2 & 1 & 0\\ 1 & \frac{3}{2} & 0\\ 0 & 0 & 1 \end{smallmatrix}\right], 
\Sigma_3=\left[\begin{smallmatrix} 1 & 1 & 0\\ 1 & 2 & 1\\ 0 & 1 & \frac{3}{2} \end{smallmatrix}\right], 
\Sigma_4= \left[\begin{smallmatrix} \frac{3}{2} & 0 & 0\\ 0 & 1 & 1\\ 0 & 1 & 2 \end{smallmatrix}\right], \\
\Sigma_5&= \left[\begin{smallmatrix} 2 & 1 & 0\\ 1 & 2 & 1\\ 0 & 1 & 1 \end{smallmatrix}\right], 
\Sigma_6=\left[\begin{smallmatrix} 1 & 1 & 0\\ 1 & 2 & 1\\ 0 & 1 & 2 \end{smallmatrix}\right], 
\Sigma_7= \left[\begin{smallmatrix} 2 & 0 & 1\\ 0 & 1 & 1\\ 1 & 1 & 2 \end{smallmatrix}\right], 
\Sigma_8= \left[\begin{smallmatrix} 2 & 1 & 1\\ 1 & 2 & 1\\ 1 & 1 & 2 \end{smallmatrix}\right].
\end{align*}
We picked region $\mc R_1$ as non-anomalous, while hypothesis $4$, $6$, and
$8$ were true at regions $\mc R_2, \mc R_3$, and $\mc R_4$, respectively.
The Kullback-Leibler divergence at a region was chosen as the minimum of
all possible Kullback-Leibler divergences at that region.  A sample
evolution of the adaptive ensemble CUSUM algorithm with GLR statistic
replacing the CUSUM statistic is shown in Fig~\ref{fig:GLR-sample}(a). It
can be seen the performance is similar to the performance in
Example~\ref{ex-adaptive-ensemble}.  As an additional ramification of this
algorithm, we also get the likelihood of each hypothesis at each region. It
can be seen in Fig~\ref{fig:GLR-sample}(b) that the true hypothesis at each
region corresponds to the hypothesis with maximum likelihood. \oprocend
\end{example}

\begin{figure}\scriptsize
\centering
    \includegraphics[width=1\columnwidth]{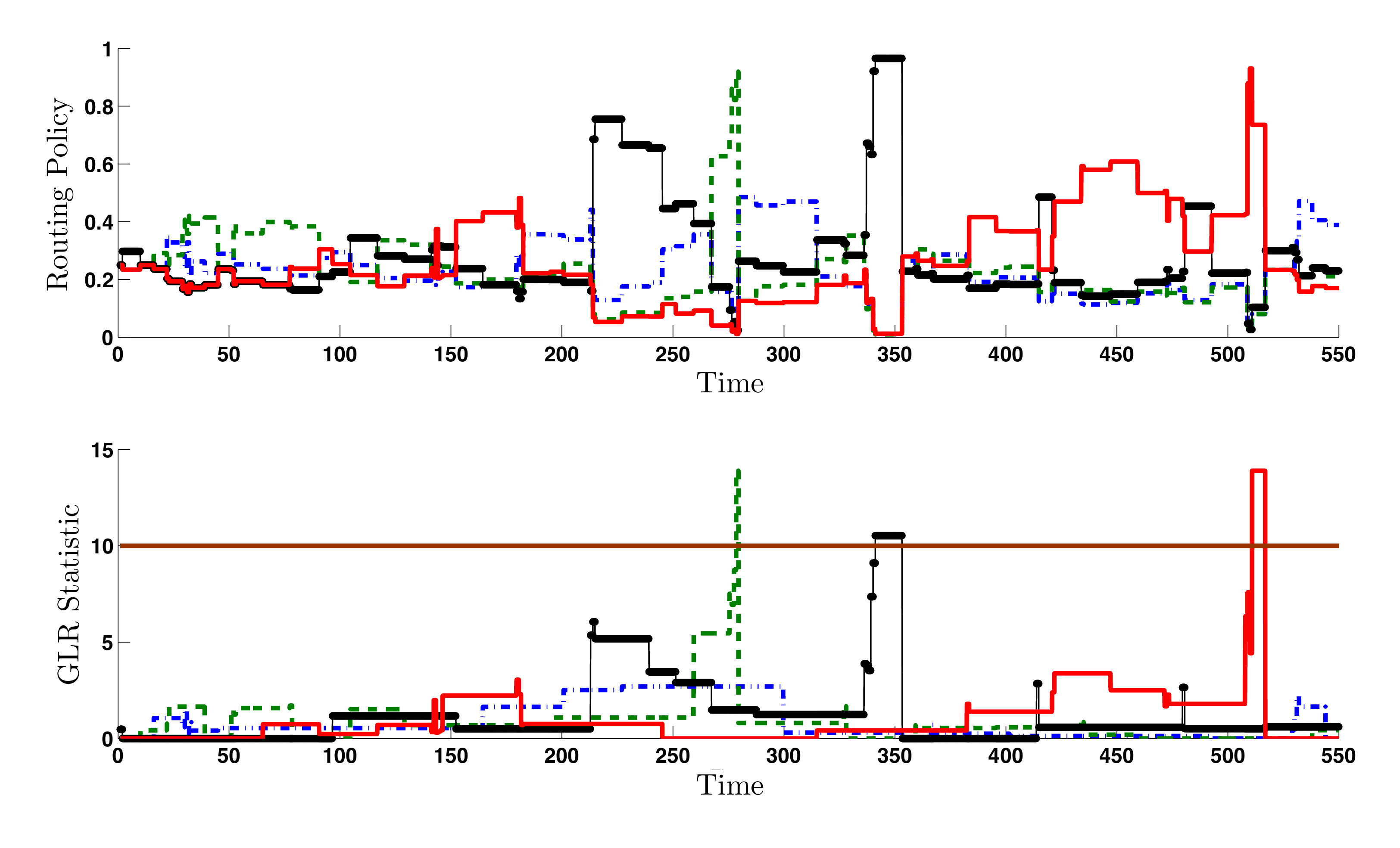}
(a) GLR statistic under anomalous conditions\\
 \includegraphics[width=1\columnwidth]{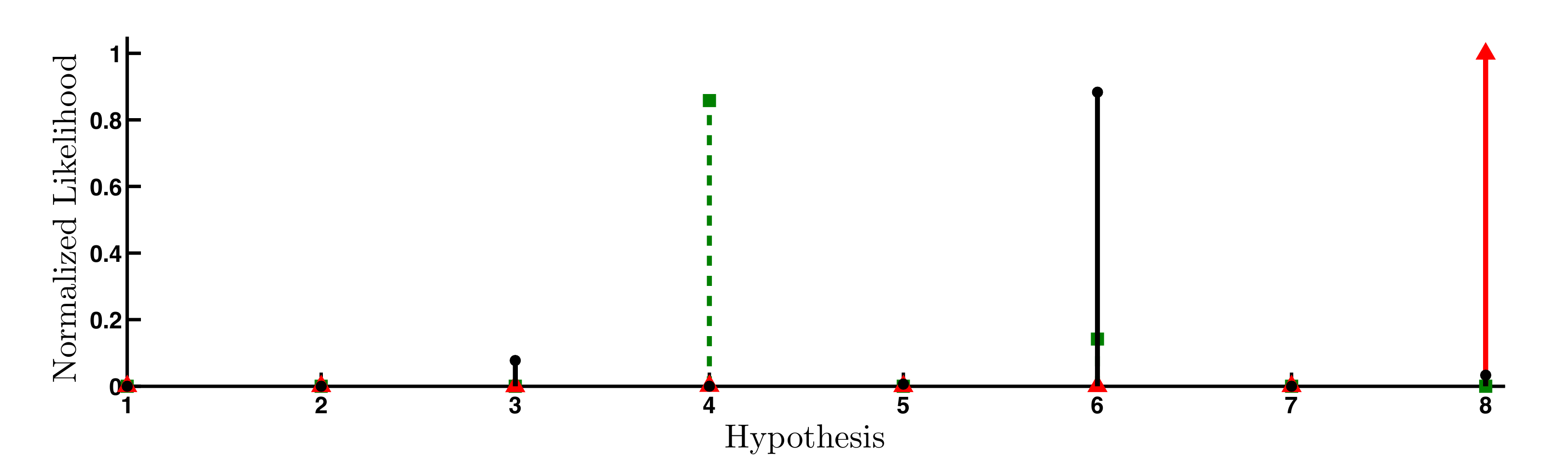}
(b) Normalized likelihood of each hypothesis
\caption{Sample evolution of the adaptive ensemble CUSUM algorithm with GLR
  statistic.  The dashed-dotted blue line, dashed green line, solid red
  line and solid black line with dots represent data from regions $\mc R_1,
  \mc R_2, \mc R_3$ and $\mc R_4$, respectively.  The solid brown
  horizontal line represents the threshold. The vehicle routing probability
  is a function of the likelihood of anomaly at each region. As the
  likelihood of an anomaly being present at a region increases, also the
  probability to survey that region increases. Anomalies appear at region
  $\mc R_2$, $\mc R_3$ and $\mc R_4$ at times $100$, $300$ and $500$,
  respectively. Once an anomaly is detected, it is removed and the
  statistic is reset to zero. The true hypothesis at each region
  corresponds to the hypothesis with maximum likelihood
\label{fig:GLR-sample}}
\end{figure}

\section{Experimental Results}\label{sec:experimental-results}
We first detail our implementation of the algorithms using the Player/Stage robot control software
package and the specifics of our robot hardware.  We then present the
results of the experiment.

\subsubsection*{Robot hardware}
We use Erratic mobile robots from Videre Design shown in
Fig. \ref{fig:robot}.  The robot platform has a roughly square
footprint (40cm $\times$ 37cm), with two differential drive wheels and
a single rear caster. Each robot carries an on-board computer with a
1.8Ghz Core 2 Duo processor, 1 GB of memory, and 802.11g wireless
communication. For navigation and localization, each robot is equipped
with a Hokuyo URG- 04LX laser rangefinder. The rangefinder scans 683
points over 240\degree \;at 10Hz with a range of 5.6 meters.

\begin{figure}[t]
\centering
\includegraphics[width=.9\columnwidth]{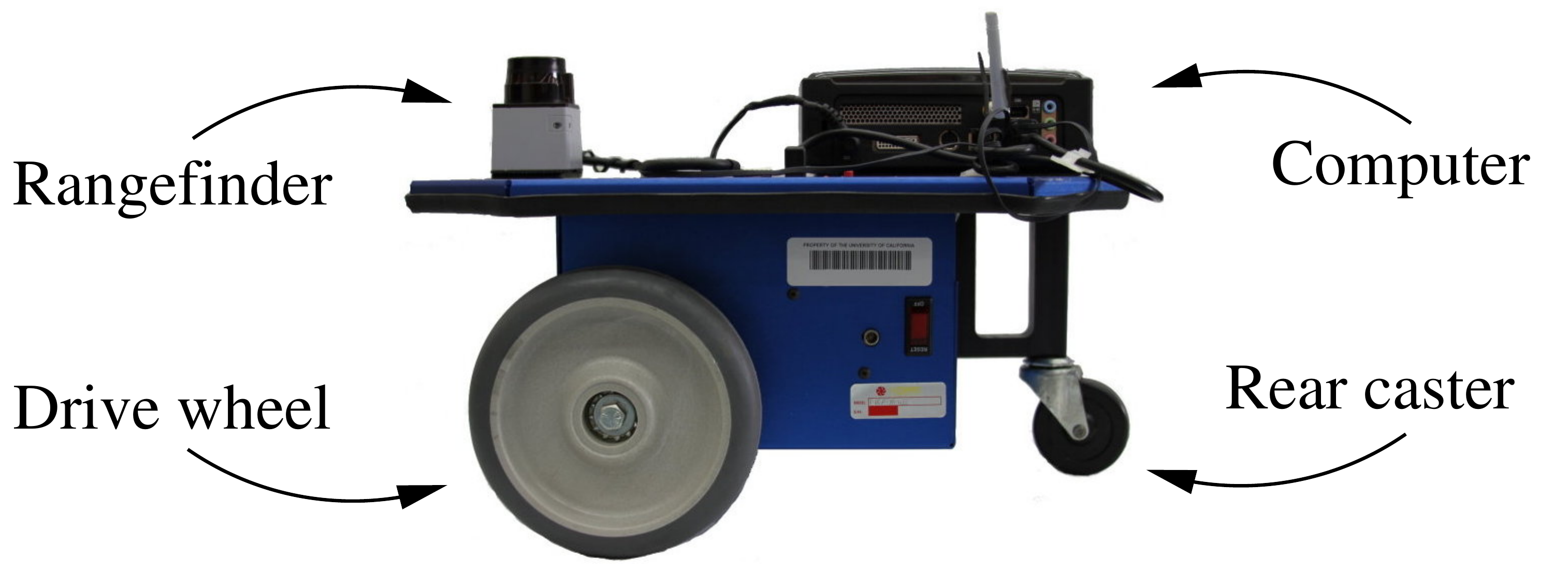}
\caption{Erratic mobile robot with URG-04LX laser rangefinder.}
\label{fig:robot}
\end{figure}

\subsubsection*{Localization}
We use the \textit{amcl} driver in Player which implements Adaptive
Monte-Carlo Localization~\cite{DF-WB-FD-ST:01}. The physical robots
are provided with a map of our lab with a 15cm resolution and told
their starting pose within the map (Fig. \ref{fig:exp_lab}). We set an
initial pose standard deviation of 0.9m in position and 12\degree \;
in orientation, and request updated localization based on 50 of the
sensors range measurements for each change of 2cm in robot position or
2\degree\; in orientation.  We use the most likely pose estimate by
amcl as the location of the robot.

\subsubsection*{Navigation}
Each robot uses the \textit{snd} driver in Player for the Smooth
Nearness Diagram navigation~\cite{JWD-FB:08a}. For the hardware, we
set the robot radius parameter to 22cm, obstacle avoidance distance to
0.5m, and maximum speed to $0.2$m/s. 
We let a robot achieve its target when it is within 10cm of the
target.

\subsubsection*{Experiment setup}
\begin{figure}
    \centering
    \includegraphics[width=.7\columnwidth]{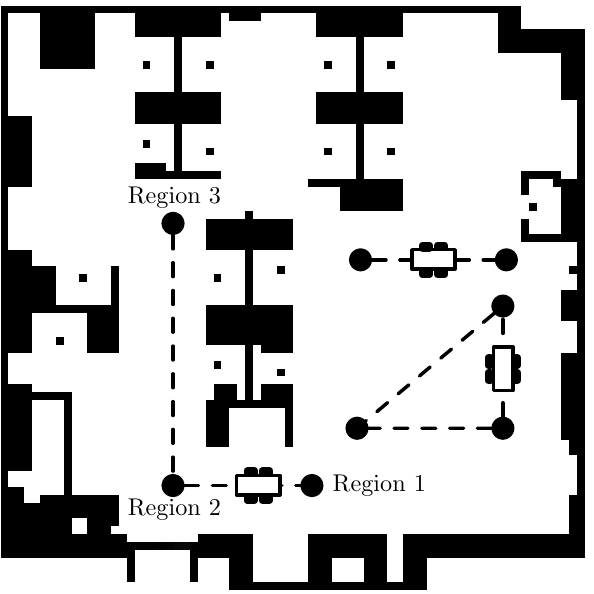}
    \caption{This figure shows a map of our lab together with our
      surveillance configuration. Three erratic robots survey the
      selected $8$ regions (black dots), which have been partitioned
      among the robots. Regions $1$, $2$, and $3$ are also considered
      in Fig. \ref{fig:experiment}, where we report the statistics of
      our detection algorithm.}
    \label{fig:exp_lab}
\end{figure}

For our experiment we employed our team of $3$ Erratic robots to
survey our laboratory. As in Fig. \ref{fig:exp_lab}, a set of $8$
important regions have been chosen and partitioned among the
robots. Each robot surveys its assigned regions. In
particular, each robot implements the single robot adaptive ensemble
CUSUM algorithm in its regions. Notice that Robot~$1$ cannot travel
from region~$1$ to region~$3$ in a single hop. Therefore, Robot~$1$
selects the regions according to a Markov chain with desired
stationary distribution. This Markov chain was constructed using the
Metropolis-Hastings algorithm. In particular, for a set of regions
modeled as a graph $\mc G=(V,\mc E)$, to achieve a desired stationary
routing policy $\q$, the Metropolis-Hastings algorithm~\cite{LW:04},
picks the transition matrix $P$ with entries:
\[
P_{ij}=\begin{cases}
  0, & \text{if } (i,j) \notin \mc E,\\
  \min \big\{\frac{1}{d_i}, \frac{q_j}{q_i d_j}\big\} & \text{if } (i,j) \in \mc E \text{ and } i\ne j,\\
  1-\sum_{k=1, k\ne i}^n P_{ik} & \text{if } (i,j) \in \mc E \text{
    and } i= j,
\end{cases}
\]
where $d_i$ is the number of regions that can be visited from region $\mc R_i$.
\begin{figure}
    \centering
    \includegraphics[width=.9\columnwidth]{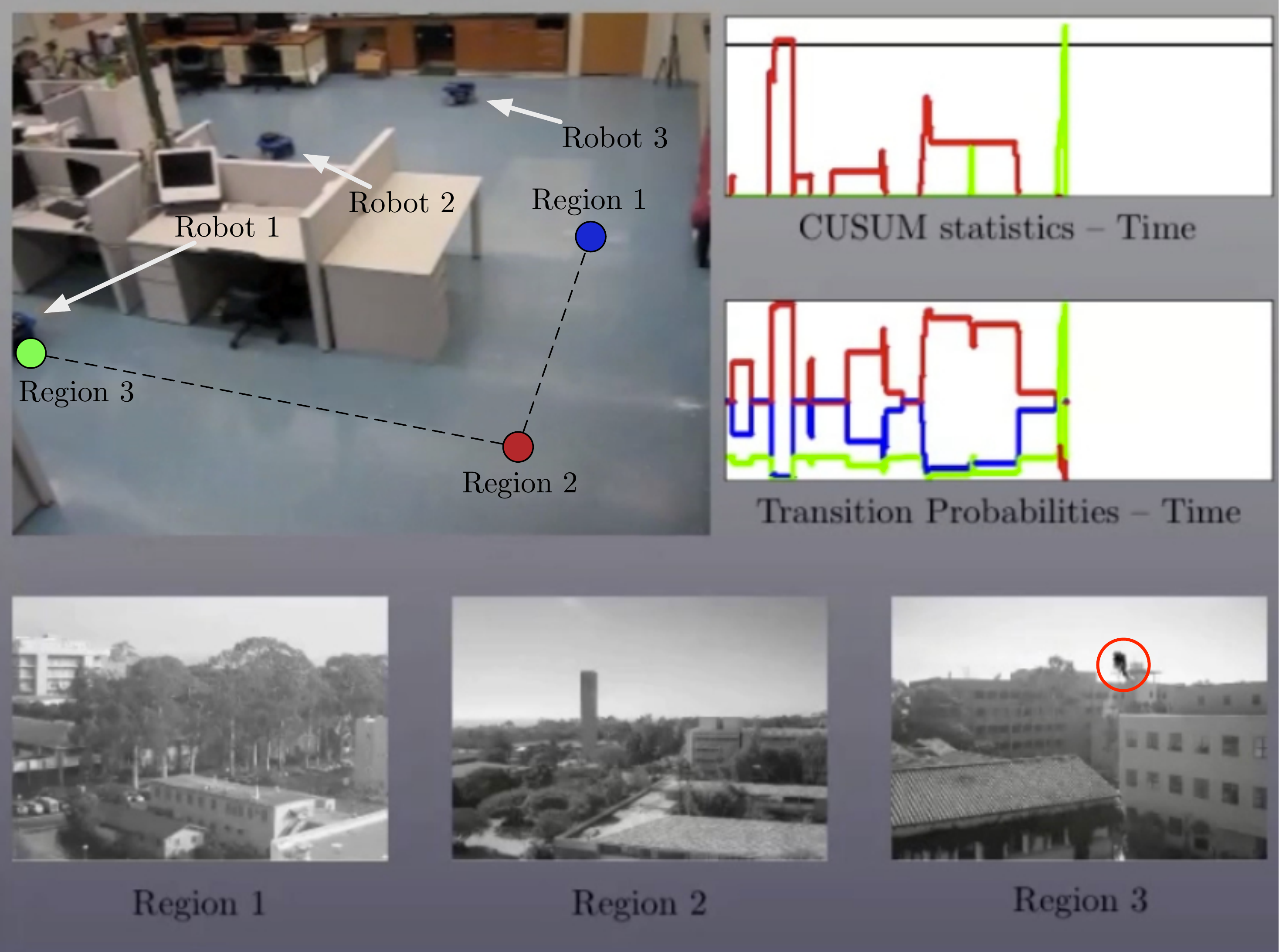}
    \caption{A snapshot of our surveillance experiment, where three
      robots survey six locations in our lab
      (Fig. \ref{fig:exp_lab}). In this figure we show the three
      regions assigned to the first robot. Each region correspond to a
      part of our campus, and observations are taken
      accordingly. Notice that Region 3 contains an anomaly (black
      smoke), and that the CUSUM statistics, which are updated upon
      collection of observations, reveal the anomaly (green peak). The
      transition probabilities are updated according to our adaptive
      ensemble CUSUM algorithm.}
    \label{fig:experiment}
\end{figure}

Observations (in the form of pictures) are collected by a robot each
time a region is visited. In order to have a more realistic
experiment, we map each location in our lab to a region in our
campus. Then, each time a robot visit a region in our lab, a picture
of a certain region in our campus is selected as observation (see
Fig. \ref{fig:experiment}). Pictures have been collected prior to the
experiment. 

\begin{figure}
    \centering
    \includegraphics[width=.9\columnwidth]{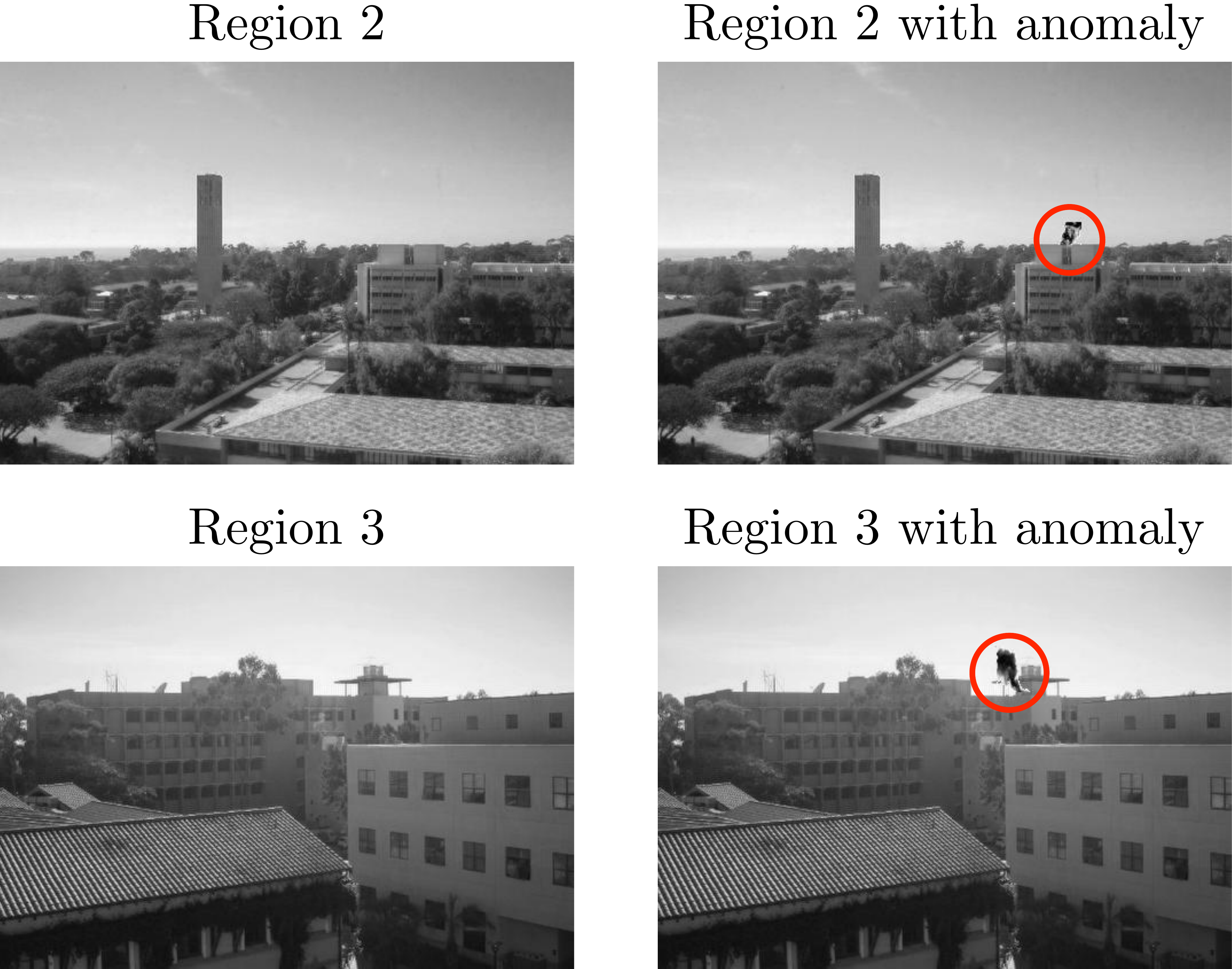}
    \caption{This figure shows sample pictures from Region 2 and Region
      3, both with and without the anomaly to be detected.}
    \label{fig:anomaly}
\end{figure}

Finally, in order to demonstrate the effectiveness of our anomaly
detection algorithm, some pictures from regions 2 and 3 have been
manually modified to contain an anomalous pattern; see
Fig. \ref{fig:anomaly}. Anomalous pictures are collected by Robot 1 at
some pre-specified time instants (the detection algorithm, however,
does not make use of this information).

\subsubsection*{Probability density function estimation}
In order to implement our adaptive ensemble CUSUM algorithm, the
probability density functions of the observations at the regions in presence and absence 
of  an anomaly need to be estimated. For this task, we first
collect sample images, and we register them in order to align their
coordinates~\cite{RJR-SA-OA-BR:05}. We then select a reference image,
and compute the difference between the sample pictures and the
reference image. Then, we obtain a coarse representation of each
difference image by dividing the image into blocks. For each
difference image, we create a vector containing the mean value of each
block, and we compute the mean and standard deviation of these
vectors. Finally, we fit a normal distribution to represent the
collected nominal data. In order to obtain a probability density
distribution of the images with anomalies, we manually modify the
nominal images, and we repeat the same procedure as in the nominal
case.

\subsubsection*{Experiment results}
The results of our experiment are illustrated in
Fig. \ref{fig:experiment}, Fig. \ref{fig:anomaly}, and in the
multimedia extension available at {\tt http://www.ijrr.org}. From the
CUSUM statistics we note that the anomalies in Region 2 and Region 3
are both detected: indeed both the red curve and the green curve pass
the decision threshold. We also note that few observations are
necessary to detect the anomaly. Since the robots successfully survey
the given environment despite sensor and modeling uncertainties due to
real hardware, we conclude that our modeling assumptions in Section
\ref{sec:setup} are not restrictive.

\section{Conclusions} \label{sec:conclusions} In this paper we studied
a spatial quickest detection problem in which multiple vehicles
surveil a set of regions to detect anomalies in minimum time. We
developed a novel ensemble CUSUM algorithm to detect an anomaly in any
of the regions. A stochastic vehicle routing policy was adopted in
which the vehicle samples the next region to visit from a probability
vector. In particular, we studied (i) stationary policy: the
probability vector is a constant function of time; and (ii) adaptive
policy: the probability vector is adapted with time based on the
collected observations.  We designed an efficient stationary policy
that depends on the travel time of the vehicles, the processing time
required to collect information at each region, and the anomaly
detection difficulty at each region. In adaptive policy, we modified
the efficient stationary policy at each iteration to ensure that the
regions with high likelihood of anomaly are visited with high
probability, and thus, improved upon the performance of the stationary
policy.  We also mentioned the methods that extend the ideas in this
paper immediately to the scenario in which the distributions of the
observations in presence and absence of anomaly are not completely
known, but belong to some parametrized family, or to the scenario in
which the observations collected from each region are not independent
(e.g., in the case of dynamic anomalies).

There are several possible extensions of the ideas considered here. First,
in the case of dependent observations at each region, the current method
assumes known distributions in presence and absence of anomalies. An
interesting direction is to design quickest detection strategies that are
robust to the uncertainties in these distributions. Second, the anomalies
considered in this paper are always contained in the same region. It would
be of interest to consider anomalies that can move from one region to
another.  
Third, the policy presented in this paper considers an arbitrary partition that satisfy some cardinality constraints. It is of interest to come up with \emph{smarter} partitioning policies that take into consideration the travel times, and the difficulty of detection at each region. 
Last, to construct the fastest mixing Markov chain with desired
stationary distribution, we relied on time-homogeneous Markov chains. A
time varying Markov chain may achieve a faster convergence to the desired
stationary distribution~\cite{JG-JB:05}.  This is also an interesting
direction to be pursued.

\section*{Appendix}

\renewcommand{\theequation}{A-\arabic{equation}}
\setcounter{equation}{0}  
\subsection{Probabilistic guarantee to the uniqueness of critical
  point}
We now provide probabilistic guarantee for
Conjecture~\ref{conj:unique}. The average detection delay for a single
vehicle under a stationary policy $\q$ is
\[
\gav(\q) =\Big(\sum_{i=1}^n \frac{v_i}{q_i}\Big) \Big(\sum_{i=1}^n q_i \Tbar _i + \sum_{i=1}^n \sum_{j=1}^n q_i q_j d_{ij}\Big),
\] 
where $v_i = w_i \etab/\dist_i$ for each $i\in \until{n}$. A local
minimum of $\gav$ can be can be found by substituting
$q_{n}=1-\sum_{j=1}^{n-1} q_j$, and then running the gradient descent
algorithm from some initial point $\q_0\in \Delta_{n-1}$ on the
resulting objective function.

Let $\boldsymbol{v}=(v_1,\ldots,v_n)$ and
$\boldsymbol{T}=(\Tbar_1,\ldots,\Tbar_n)$.  We assume that the
parameters $\{\boldsymbol{v}, \boldsymbol{T}, D, n\}$ in a given
instance of optimization
problem~\eqref{eq:original-objective-function} and the chosen initial
point $\q_0$ are realizations of random variables sampled from some
space $\mc K$.  For a given realization $\kappa\in\mc K$, let the
realized value of the parameters be $\{\boldsymbol{v}(\kappa),
\boldsymbol{T}(\kappa), D(\kappa), n(\kappa)\}$, and the chosen
initial point be $\q_0(\kappa)$. The associated optimization problem
is:
\begin{equation}\label{eq:random-problem}
\underset{\q \in \Delta_{n(\kappa)-1}}{\minimize} \quad \gav (\q  \,|\, \kappa),
\end{equation}
where, for a given realization $\kappa \in \mc K$, $\map{\gav (\cdot
  \,|\, \kappa)}{\Delta_{n(\kappa)-1}}{\real_{>0}\union \{+\infty\}}$
is defined by
\begin{multline*}\label{eq:sampled-optimization-problem}
\!\!\!\!\gav(\q \,|\, \kappa) = \Big(\sum_{i=1}^{n(\kappa)} \frac{v_i(\kappa)}{q_i}\Big) \Big(\!\sum_{i=1}^{n(\kappa)} q_i \Tbar _i(\kappa) + \sum_{i=1}^{n(\kappa)} \sum_{j=1}^{n(\kappa)} q_i q_j d_{ij}(\kappa)\!\Big).
\end{multline*}
For a given realization $\kappa$, let $\map{gd(\cdot \,|\,
  \kappa)}{\Delta_{n(\kappa)-1}}{\Delta_{n(\kappa)-1}}$ be the
function that determines the outcome of the gradient descent algorithm
applied to the function obtained by substituting
$q_{n(\kappa)}=1-\sum_{j=1}^{n(\kappa)-1} q_j$ in $\gav(\q \,|\,
\kappa)$. In other words, the gradient descent algorithm starting from
point $q_{0}(\kappa)$ converges to the point $gd(q_{0}(\kappa) \,|\,
\kappa)$.  Consider $N_1$ realizations $\{\kappa_1,\ldots,
\kappa_{N_1}\}\in \mc K^{N_1}$.  Let $\subscr{\q}{optimal}(\kappa)=
gd(\frac{1}{n(\kappa)}\boldsymbol{1}_{n(\kappa)} \,|\, \kappa) $, and
define
\[
\hat{\gamma} =\max\setdef{\|gd(\q_{0}(\kappa_s) \,|\, \kappa_{s})-\subscr{\q}{optimal}(\kappa_s)\|}{s\in\until{N_1}}.
\]
%
%
%
%
 
It is known~\cite{GCC-FD-RT:11} that if $N_1\ge - (\log \nu_1)
/\mu_1$, for some $\mu_1, \nu_1 \in {]0,1[}$, then, with at least
confidence $1-\nu_1$, it holds
\begin{multline*}
  \mathbb{P}(\setdef{\q_0(\kappa)\!\in
    \!\Delta_{n(\kappa)\!-\!1}}{\|gd(\q_{0}(\kappa) \,|\, \kappa)-\subscr{\q}{optimal}(\kappa)\| \le\hat{\gamma}} )\\
  \ge 1-\mu_1,
\end{multline*}
for any realization $\kappa\in \mc K$.


We sample the following quantities: the value $n$ as uniformly
distributed in $\{3,\ldots,12\}$; each coordinate of the $n$ regions
in two dimensional space from the normal distribution with mean $0$
and variance $100 $; the value $T_i$, for each $i\in\until{n}$, from
the half normal distribution with mean $0$ and variance $100$; and the
value $v_i$, for each $i\in \until{n}$, uniformly from ${]0,1[}$.  For
a realized value of $n$, we chose $\q_0$ uniformly in $\Delta_{n-1}$.
Let the matrix $D$ be the Euclidean distance matrix between the $n$
sampled regions.

We considered $N_1=1000$ realizations of the parameters
$\{\boldsymbol{v}, \boldsymbol{T}, D, n\}$ and initial value $\q_0$.
The sample sizes were determined for $\mu_1=0.01$ and $\nu_1=10^{-4}$.
The value of $\hat{\gamma}$ obtained was $10^{-4}$.  Consequently, the
gradient descent algorithm for the optimization
problem~\eqref{eq:original-objective-function} starting from any
feasible point yields the same solution with high probability. In
other words, with at least confidence level $99.99\%$ and probability
at least $99\%$, the optimization
problem~\eqref{eq:original-objective-function} has a unique critical
point at which the minimum is achieved.

\footnotesize

\bibliographystyle{plainnat}
\bibliography{}
\end{document}